\documentclass{article} 
\usepackage{iclr2027_conference,times}

\usepackage{amsmath,amsfonts,bm}

\def\eqref#1{equation~\ref{#1}}

\def\1{\bm{1}}

\DeclareMathAlphabet{\mathsfit}{\encodingdefault}{\sfdefault}{m}{sl}
\SetMathAlphabet{\mathsfit}{bold}{\encodingdefault}{\sfdefault}{bx}{n}

\usepackage{url}
\usepackage{enumitem}
\usepackage{algorithm}
\usepackage[noEnd=true,indLines=true]{algpseudocodex}
\usepackage{amssymb}
\usepackage{microtype}
\usepackage{amsmath}
\usepackage{amsthm}
\usepackage{booktabs}
\usepackage{multirow}
\usepackage{tabularx}
\usepackage{xspace}
\usepackage{graphicx}
\usepackage{wrapfig}
\usepackage{float}
\usepackage[most]{tcolorbox}
\usepackage{pifont}
\usepackage{wrapfig}
\usepackage{colortbl}
\usepackage{capt-of}

\newtheorem{lemma}{Lemma}

\newtheorem{proposition}{Proposition}
\newtheorem{remark}{Remark}
\newtheorem{corollary}{Corollary}
\newtheorem{theorem}{Theorem}

\usepackage[colorlinks]{hyperref}
\definecolor{MyCustomBlue}{HTML}{3389DA}
\definecolor{MyCustomPink}{HTML}{EE4F89}
\definecolor{CustomBlue}{HTML}{2643CF}
\definecolor{rowblue}{HTML}{E0F7FA}
\definecolor{qualitycream}{HTML}{FFF5E8}
\definecolor{qualitypink}{HTML}{F8E5ED}
\definecolor{qualityblue}{HTML}{E0F4FC}
\definecolor{qualitygreen}{HTML}{E8F3E3}

\hypersetup{
    citecolor=MyCustomBlue,
    linkcolor=MyCustomPink,
    urlcolor=CustomBlue
}

\usepackage{tikz}
\newcommand{\circnum}[1]{%
  \tikz[baseline=(char.base)]{
    \node[shape=circle, fill=black, text=white,
          inner sep=0.9pt, font=\scriptsize\sffamily\bfseries] (char) {#1};
  }%
}

\newcommand{\resultbest}[1]{%
  \colorbox{red!10}{$\mathbf{#1}$}}
\newcommand{\resultsecond}[1]{%
  \colorbox{orange!15}{$#1$}}

\definecolor{propositionpropbg}{RGB}{242,249,255}
\definecolor{propositionpropborder}{RGB}{90,150,220}
\newtcolorbox{propositionbox}{
  colback=propositionpropbg,
  colframe=propositionpropborder,
  boxrule=1.2pt,
  arc=2.5mm,
  left=0.5mm,
  right=0.5mm,
  top=0.3mm,
  bottom=0.3mm,
  before skip=5pt,
  after skip=5pt
}

\definecolor{remarkpropbg}{RGB}{244,252,247}
\definecolor{remarkpropborder}{RGB}{72,155,105}
\newtcolorbox{remarkbox}{
  colback=remarkpropbg,
  colframe=remarkpropborder,
  boxrule=1.2pt,
  arc=2.5mm,
  left=0.5mm,
  right=0.5mm,
  top=0.3mm,
  bottom=0.3mm,
  before skip=5pt,
  after skip=5pt
}

\definecolor{venuebg}{HTML}{D4ECF8}
\newtcbox{\venuebox}{on line,
  boxrule=0pt,
  colback=venuebg,
  colframe=gray!15,
  arc=1.5pt,
  left=2pt,right=2pt,top=1pt,bottom=1pt,
  boxsep=0pt}
  
\newcommand{\venuetag}[1]{\hspace{0.0em}{\scriptsize\venuebox{\textcolor{black!80}{#1}}}}

\makeatletter
\newcommand{\appendixcontentsline}[5]{%
  \begingroup
  \c@tocdepth=3\relax
  \@dottedtocline{#1}{#2}{#3}{\hyperref[#4]{\numberline{\ref*{#4}}#5}}{\hyperref[#4]{\pageref*{#4}}}%
  \endgroup
}
\newcommand{\appendixentry}[2]{%
  \par\addvspace{0.35em}
  \begingroup\bfseries
  \appendixcontentsline{1}{0em}{2.2em}{#1}{#2}%
  \endgroup
}
\newcommand{\appendixsubentry}[2]{%
  \appendixcontentsline{2}{1.4em}{2.8em}{#1}{#2}%
}
\newcommand{\appendixsubsubentry}[2]{%
  \appendixcontentsline{3}{3.4em}{3.5em}{#1}{#2}%
}
\newcommand{\appendixunnumberedentry}[2]{%
  \par\addvspace{0.35em}
  \begingroup\bfseries
  \c@tocdepth=3\relax
  \@dottedtocline{1}{0em}{0em}{\hyperref[#1]{#2}}{\hyperref[#1]{\pageref*{#1}}}%
  \endgroup
}
\makeatother

\newcommand{\method}{HiLoRe\xspace}
\newcommand{\Hact}{\ensuremath{\mathrm{H}}\xspace}
\newcommand{\Lact}{\ensuremath{\mathrm{L}}\xspace}
\newcommand{\Ract}{\ensuremath{\mathrm{R}}\xspace}

\title{HiLoRe: What to Store, Compress, or Recompute for Efficient GRPO Training}

\author{
Xinrui Chen$^{1}$
\quad
Mengyang Li$^{2}$
\quad
Ou Wu$^{1}$\thanks{Corresponding author.}
\quad
Ji Zhang$^{3}$\\
$^{1}$Hangzhou Institute for Advanced Study,\\
University of Chinese Academy of Sciences, Hangzhou, China\\
$^{2}$Tianjin Key Laboratory of Wireless Mobile Communications and Power Transmission,\\
Tianjin Normal University, Tianjin, China\\
$^{3}$University of Southern Queensland\\
\texttt{chenxinrui25@mails.ucas.ac.cn}\\
}

\iclrfinalcopy 
\begin{document}

\maketitle
\lhead{Preprint}

\begin{abstract}
Group-relative policy optimization (GRPO) makes learner-side activations a major memory--computation bottleneck: gradient checkpointing reduces activation memory through recomputation, but fixed schedules can leave roughly 18~GB unused on a 48-GB GPU despite substantial recomputation overhead. Existing activation-management methods set state fidelity from execution cost, tensor properties, or generic compression sensitivity, without explicitly incorporating GRPO's analytic update structure into state-fidelity allocation. We formalize this dependence as policy-update exposure, linking the current GRPO loss coefficients to state-level approximation sensitivity. These coefficients are available before backward without an additional backward pass. We introduce \textbf{\method}, which allocates graph-attributed recovery units among \underline{\textbf{hi}}gh-precision storage, \underline{\textbf{lo}}w-precision compression, and deterministic \underline{\textbf{re}}computation using measured recovery utility and update-conditioned approximation risk. It combines high-precision storage and deterministic recomputation with low-precision recovery under a calibrated risk budget. Across five model--task settings with 2K responses and memory $\le1.10\times$ GC's per-GPU actor-update peak, \method's actor-update throughput gains reach 13.5\% over GC and 7.9\% over the fastest evaluated baseline, with paired mean downstream-score differences below 0.6 percentage points.
\href{https://anonymous.4open.science/r/HiLoRe}{Code}.
\end{abstract}

\section{Introduction}

GRPO is common in verifiable-reward LLM post-training~\citep{shao2024deepseekmath,song2026AdaReasoner,Mroueh2026Revisiting}, where multiple responses increase learner workload, making retained activations a memory--computation bottleneck. Gradient checkpointing reduces activation residency via recomputation; compression and hybrid recovery lower retained-state cost via approximation~\citep{chen2016training,liu2022gact,korthikanti2023reducing,chen2025adacc,Doan2026INSTANT,khalaf2026qkv,wei2026activation}. Existing methods set fidelity from execution structure, tensor properties, compression error, or generic sensitivity. Fig.~\ref{fig:teaser} shows unused memory headroom under fixed recovery (Fig.~\ref{fig:teaser}a) and update-dependent state fidelity (Fig.~\ref{fig:teaser}b). \textbf{Existing recovery policies do not explicitly connect GRPO's analytic update structure to state-level fidelity allocation.}

We formalize this update--state coupling as \textbf{policy-update exposure}. A state with little current update mass can tolerate more aggressive approximation, whereas one strongly supporting the active update requires higher fidelity. GRPO already provides exact analytic per-token update coefficients before backward, combining rollout advantages, policy ratios, clipping states, update masks, and KL regularization without an additional backward pass (Fig.~\ref{fig:teaser}c). In post-update diagnostic replays, \textbf{54.8--73.6\% of padded response slots have zero policy-gradient contribution, while the top 10\% by $|\omega_t|$ carry 52.4--66.8\% of total coefficient mass}. Mapping these coefficients to recovery states turns the optimization signal into a state-level predictor of approximation-induced gradient distortion.

This formulation yields \textbf{\method} for update-aware recovery allocation over graph-attributed states. \method distinguishes recovery actions by update effect: high-precision storage and deterministic recomputation preserve the update under execution-equivalent conditions, while low-precision recovery is selected under an explicit fidelity constraint. During the exact forward pass, \method retains a memory-bounded set of high-precision and compressed state bundles. The current update then selects available bundles for backward recovery and assigns the remaining units to deterministic recomputation. Allocating memory to recovery actions with high realized utility and admissible update risk, \method improves actor-update efficiency while preserving update fidelity.

\begin{figure}[t]
    \centering
    \includegraphics[width=\linewidth, trim=0mm 0mm 0mm 5mm, clip]{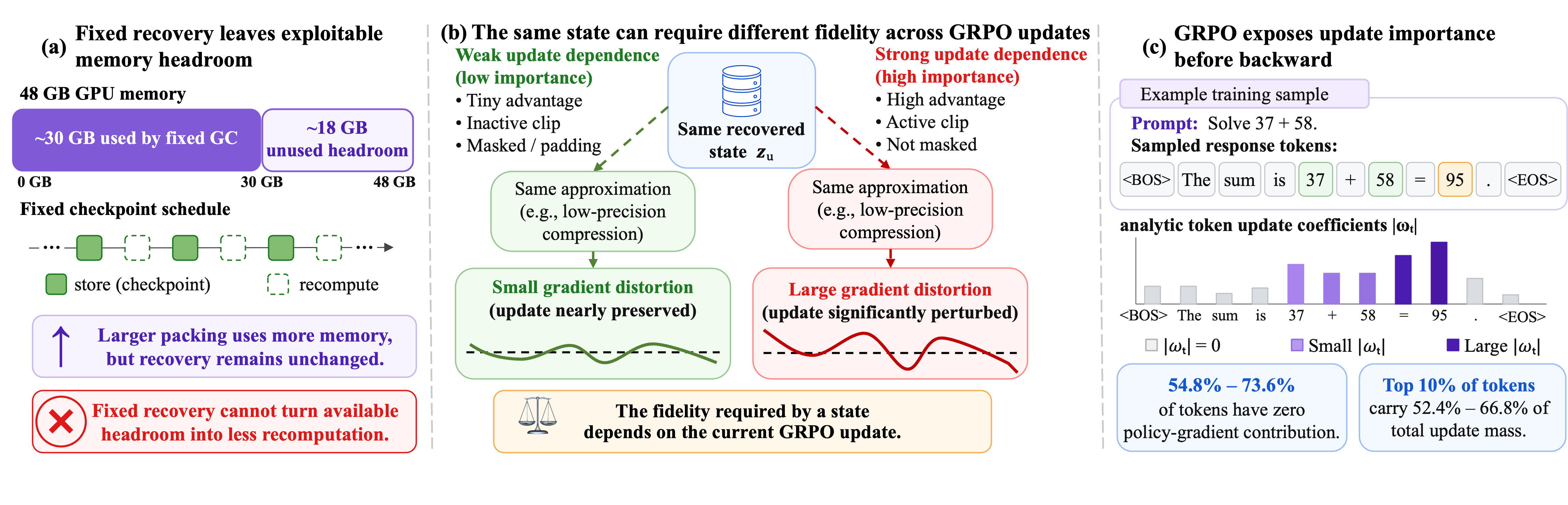}
    \vspace{-0.45in}
    \caption{Motivation of update-conditioned recovery. Fixed recovery underutilizes memory headroom, while GRPO exposes update-dependent state fidelity before backward.}
    \label{fig:teaser}
    \vspace{-0.2in}
\end{figure}

\method optimizes learner-state recovery under a fixed GRPO workload, attributing gains to recovery allocation. Fixed-workload comparisons demonstrate improvements on the memory--throughput--fidelity frontier across tasks, response lengths, attention architectures, and 3B--8B models. On-policy training evaluations show downstream quality near paired GC. Post-update diagnostic replays evaluate how well policy-update exposure predicts approximation-induced gradient distortion. Across five model--task settings spanning three models and three tasks, \method improves actor-update throughput by up to 13.47\% over GC and 7.87\% over the fastest evaluated baseline with 2K responses at $B=1.10B_{\mathrm{GC}}$. Paired mean downstream scores differ from GC by less than 0.6 percentage points.
 
Our contributions are:
\begin{itemize}[leftmargin=*,nosep]
    \item We formulate \textbf{policy-update exposure} to connect the current GRPO update to state-level recovery fidelity using analytic per-token loss coefficients.
    \item We derive an update-conditioned gradient-deviation bound for low-precision recovery, linking reconstruction error to policy-update exposure.
    \item We introduce \textbf{\method}, a graph-attributed H/L/R allocator that combines realized recovery utility with calibrated, update-conditioned approximation risk under explicit memory and fidelity constraints for a fixed GRPO workload.
    \item Actor-update throughput improves within evaluated memory budgets across workloads, lengths, architectures, and scales, with mean downstream scores near paired GC. Post-update diagnostics show improved risk prediction over evaluated static and generic sensitivity signals.
\end{itemize}

\section{Related Work}

\textbf{Training-State Recovery and Activation Compression.}
Training-memory methods reduce activation residency via checkpointing/rematerialization~\citep{chen2016training,jain2020checkmate,korthikanti2023reducing,zhao2023rockmate,li2026out}, compression~\citep{chen2021actnn,liu2022gact,woo2024alam,shamshoum-etal-2025-compact,li2026prac}, and hybrid compression--recomputation~\citep{chen2025adacc}. Recent work studies QKV states, compression criteria, layer-aware quantization, activation subspaces, and resource-aware scheduling~\citep{khalaf2026qkv,wei2026activation,lin2026agoq,sakr2024espace,Doan2026INSTANT,choudhary2026oasis,Pan2026SNIP,wu2026lazytrain}. They exploit execution structure, representations, or sensitivity; \method allocates backward-state fidelity using GRPO signals, preserving forward computation.

\textbf{Learning-Signal-Aware Training Efficiency.}
Learning signals including saliency, gradients, advantages, and entropy have been used to prioritize tokens, activations, and trajectories~\citep{simoulin-etal-2024-memory,kim-lee-2025-forward,zeng2026tokenseek,wang2025d3s,zhang2026gcpo}. In LLM RL, token and rollout selection and cost-aware optimization use such signals to select workloads~\citep{sang2026not,xu2026not,mohri2026cost}; forward-pass saliency has also guided activation selection~\citep{kim-lee-2025-forward}. \method uses GRPO update signals to set learner-state fidelity under a fixed workload, distinguishing recovery from workload selection and forward-pass saliency.

\textbf{Efficient LLM Reinforcement-Learning Post-Training.}
LLM RL systems combine faster execution, low-precision arithmetic, and learner/rollout optimizations~\citep{cui2025metrorlhf,Zhang2026FastGRPO,qiu2026fp8,huang2026qerl,li2026qurl,zhuge2026quads}. OpenRLHF~\citep{hu2025openrlhf} coordinates distributed rollout and training; Unsloth optimizes training kernels and memory management. Sparse rollout~\citep{luo-etal-2026-sparse,zhou2026sparrow}, token or rollout selection~\citep{sang2026not,xu2026not}, and KV-state reduction, reuse, or long-context scheduling~\citep{wang2026efficient,zhu2026compress,gai2026dualkv,zhou2026longstraw,li2026schedule} reduce RL pipeline costs. \method focuses on update-conditioned learner-state recovery within a fixed GRPO workload.

\begin{figure}[t]
    \centering
    \includegraphics[width=\linewidth, trim=0mm 0mm 0mm 0mm, clip]{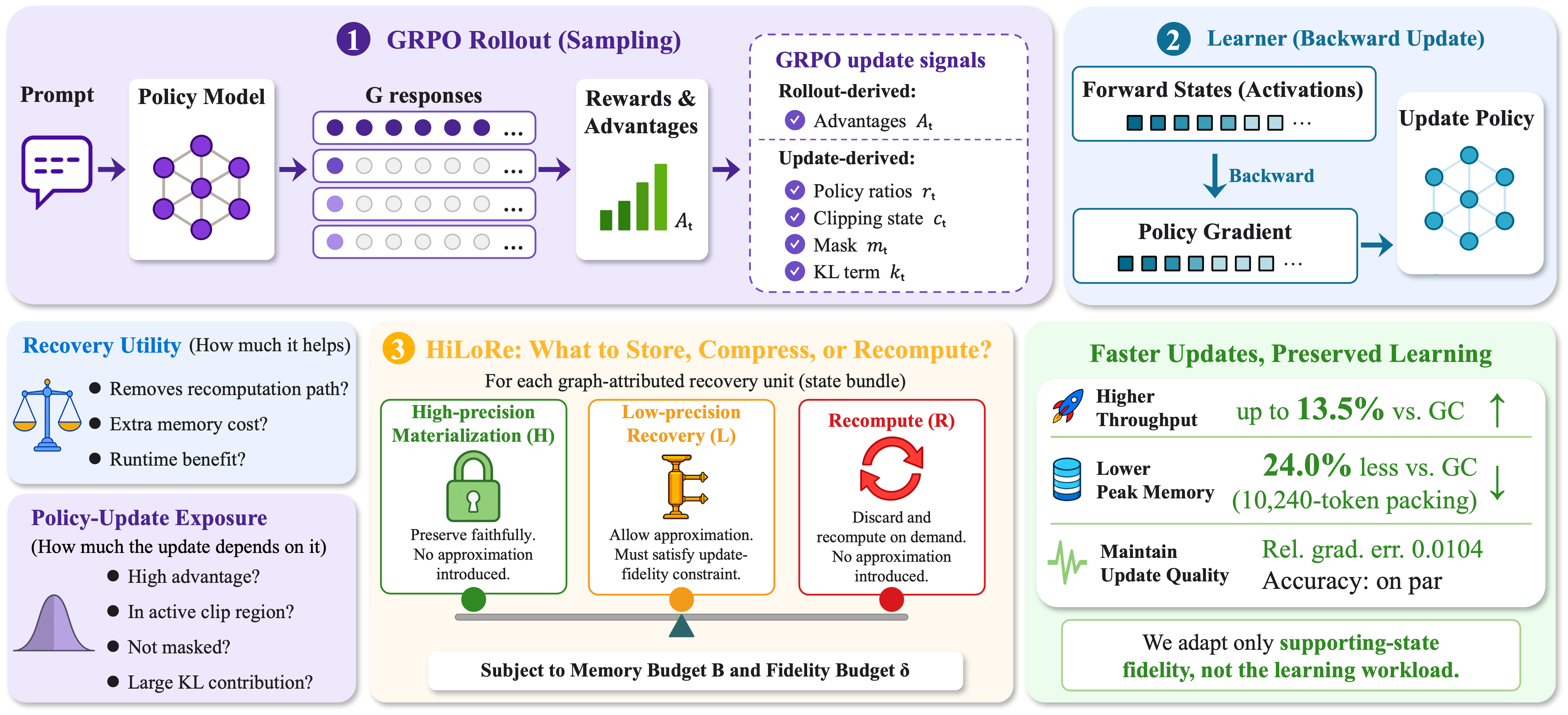}
    \vspace{-0.3in}
    \caption{\method maps GRPO-native signals and utility to fidelity-constrained H/L/R scheduling.}
    \label{fig:overview}
    \vspace{-0.15in}
\end{figure}

\section{HiLoRe}
\label{sec:method}

\method conditions learner-state fidelity on the active GRPO update and combines policy-update exposure with measured recovery utility to allocate graph-attributed units to high-precision storage, low-precision recovery, or recomputation under memory and fidelity constraints.

\subsection{Problem Setup}
\label{sec:problem}

\textbf{GRPO policy update.}
GRPO induces heterogeneous token-level update contributions through advantages, policy ratios, clipping states, and KL regularization. For each prompt $q_i$, GRPO samples $G$ responses $\{o_i^{(j)}\}_{j=1}^{G}$ and computes group-relative advantages $\widehat A_i^{(j)}$. Each response shares its advantage across constituent tokens. For token $t$, let $\widehat A_t$ denote its response advantage. Let $\theta$ and $\theta_{\mathrm{old}}$ be the current and old policy parameters, with $s_t=\log\pi_\theta(o_t\mid q,o_{<t})$, $s_t^{\mathrm{old}}=\log\pi_{\theta_{\mathrm{old}}}(o_t\mid q,o_{<t})$, $r_t=\exp(s_t-s_t^{\mathrm{old}})$, and learner update mask $m_t\in\{0,1\}$. With $N=\sum_t m_t$, the learner optimizes
\begin{equation}
\mathcal L(\theta)=
-\frac{1}{N}\sum\nolimits_t m_t
\min\!\left(r_t\widehat A_t,
\operatorname{clip}(r_t,1-\epsilon,1+\epsilon)\widehat A_t\right)
+\beta\mathcal L_{\mathrm{KL}}(\theta),
\label{eq:grpo-objective}
\end{equation}
where $\epsilon$ is the clipping radius and $\beta$ weights KL regularization against the reference policy. The learner forward pass provides the current-policy log-probabilities needed to compute the exact update coefficients in Eq.~\ref{eq:update-coefficient}, without an additional backward pass.

\textbf{Recovery decisions.}
Let $\mathcal U=\{1,\ldots,U\}$ denote verified recovery units in the learner graph. Each unit $u$ couples a forward subgraph with the complete state bundle required by backward execution and serves as a graph-valid recovery decision object, admitting actions $\mathcal A_u\subseteq\{\Hact,\Lact,\Ract\}$:
\begin{itemize}[leftmargin=1.8em,nosep,label=\ding{228}]
    \item $\Hact$ (store): retain the original state bundle in training precision;
    \item $\Lact$ (compress): retain a low-precision representation of the bundle and reconstruct it for backward;
    \item $\Ract$ (recompute): release the bundle and recompute its forward subgraph for backward.
\end{itemize}
A candidate-storage plan $\mathbf q=(q_1,\ldots,q_U)$ is fixed before each learner microbatch forward. Its retained representations restrict $\mathcal A_u$ to available recovery actions $\mathcal A_u(q_u)$: exact bundles support H/L/R, compressed bundles L/R, and checkpoints R. Current GRPO coefficients select $\mathbf a$ after forward (Appendix~\ref{app:online-scheduling}).
$\Hact$ and $\Ract$ preserve the recovered state, while $\Lact$ introduces controlled approximation. Let $d_u(a)$ denote approximation-induced update distortion:
\begin{equation}
d_u(\Hact)=d_u(\Ract)=0,
\qquad
d_u(\Lact)\ge0.
\label{eq:recovery-distortion}
\end{equation}

\textbf{Optimization objective.}
For a pre-forward candidate plan $\mathbf q$, let $T_{\mathrm{upd}}(\mathbf q,\mathbf a)$ and $M_{\mathrm{peak}}(\mathbf q,\mathbf a)$ denote actor-update time and peak device memory over candidate capture and backward recovery, and $D(\mathbf a)$ aggregate update distortion. Given memory and fidelity budgets $B,\delta$, recovery selection solves
\begin{equation}
\min_{\mathbf a}\quad T_{\mathrm{upd}}(\mathbf q,\mathbf a)\quad
\mathrm{s.t.}\quad M_{\mathrm{peak}}(\mathbf q,\mathbf a)\le B,\quad D(\mathbf a)\le\delta,\quad a_u\in\mathcal A_u(q_u)\ \forall u.
\label{eq:problem-objective}
\end{equation}
The formulation separates the two quantities governing recovery: realized recovery value and the fidelity demanded by the active update under approximation.

\subsection{Policy-Update Exposure}
\label{sec:exposure}

The GRPO objective provides the token-level quantity that weights backward contributions to the active policy update. Define the effective update coefficient of token $t$ by $\omega_t=\partial\mathcal L/\partial s_t$. Away from clipping boundaries, differentiating Eq.~\ref{eq:grpo-objective} with respect to $s_t$ gives
\begin{equation}
\omega_t=-\frac{m_t}{N}\chi_t r_t\widehat A_t+\beta\frac{\partial\mathcal L_{\mathrm{KL}}}{\partial s_t},
\qquad
\chi_t=\mathbf{1}\!\left[(\widehat A_t\ge0\land r_t\le1+\epsilon)\lor(\widehat A_t<0\land r_t\ge1-\epsilon)\right].
\label{eq:update-coefficient}
\end{equation}
At clipping boundaries, $\omega_t$ follows the learner implementation's derivative convention. The coefficient $\omega_t$ weights token $t$'s contribution to the current parameter gradient, as formalized in Proposition~\ref{prop:exact-decomposition}.
\begin{propositionbox}
\begin{proposition}[Exact decomposition of the GRPO update]
\label{prop:exact-decomposition}
Let $g_t=\nabla_\theta s_t$. With rollout quantities fixed and the token-separable sampled-token KL objective in Appendix~\ref{app:update-semantics}, Eq.~\ref{eq:grpo-objective} satisfies
\begin{equation}
\nabla_\theta\mathcal L=\sum\nolimits_t\omega_t g_t,\qquad \omega_t=\frac{\partial\mathcal L}{\partial s_t}.
\label{eq:exact-decomposition}
\end{equation}
The coefficients $\{\omega_t\}$ are exactly available from rollout data and forward-computed GRPO signals.
\end{proposition}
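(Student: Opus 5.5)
The plan is to read $\mathcal L$ as a composite function $\mathcal L(\theta)=F(s(\theta))$. Here $s(\theta)=(s_t(\theta))_t$ is the vector of current-policy log-probabilities of the sampled tokens, and $F$ is a function of this vector alone. Once that representation is established, the multivariate chain rule gives $\nabla_\theta\mathcal L=\sum_t (\partial F/\partial s_t)\,\nabla_\theta s_t=\sum_t\omega_t g_t$ directly. So the work is to show that, with rollout quantities fixed, every $\theta$-dependence of Eq.~\ref{eq:grpo-objective} passes through the $s_t$ only.

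\textbf{Surrogate term.} Fixing the rollout freezes $\widehat A_t$, $s_t^{\mathrm{old}}$, $m_t$ and $N$ as constants. Then $r_t=\exp(s_t-s_t^{\mathrm{old}})$ is a function of $s_t$ alone. Each summand $m_t\min(r_t\widehat A_t,\operatorname{clip}(r_t,1-\epsilon,1+\epsilon)\widehat A_t)$ is therefore a scalar function $\phi_t(s_t)$.

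I would verify the explicit form in Eq.~\ref{eq:update-coefficient} by cases on the sign of $\widehat A_t$:
\begin{itemize}[leftmargin=*,nosep]
\item If $\widehat A_t\ge0$, the minimum selects the unclipped branch exactly when $r_t\le1+\epsilon$. In that region the derivative is $r_t\widehat A_t$, since $\partial r_t/\partial s_t=r_t$. Otherwise the minimum equals the constant $(1+\epsilon)\widehat A_t$ and the derivative is $0$.
\item If $\widehat A_t<0$, the argument is symmetric with the threshold $r_t\ge1-\epsilon$.
\end{itemize}
This produces the indicator $\chi_t$ and the term $-\frac{m_t}{N}\chi_t r_t\widehat A_t$.

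\textbf{KL term.} By the Appendix~\ref{app:update-semantics} assumption, the sampled-token KL objective is token-separable. It has the form $\sum_t m_t\kappa(s_t-s_t^{\mathrm{ref}})/N$ for a fixed reference log-probability $s_t^{\mathrm{ref}}$, for example the $k_3$ estimator $e^{s^{\mathrm{ref}}-s}-(s^{\mathrm{ref}}-s)-1$. Each summand is again a function of $s_t$ only, so its partial derivative adds the $\beta\,\partial\mathcal L_{\mathrm{KL}}/\partial s_t$ term.

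\textbf{Main obstacle: non-differentiable points.} The function $F$ is only piecewise smooth, with kinks at $r_t=1\pm\epsilon$. Off these boundaries the chain rule is classical. At the boundaries I would take the statement at the level of the computational graph. Autodiff applies a fixed derivative convention to $\min$ and $\operatorname{clip}$. It still backpropagates through the same composite $F\circ s$, so the gradient it returns equals $\sum_t\bar\omega_t g_t$, where $\bar\omega_t$ is the implementation's adjoint at node $s_t$. Defining $\omega_t$ as that adjoint, as the text preceding the proposition does, preserves the identity exactly.

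\textbf{Availability.} Finally, every ingredient of $\omega_t$ is known after the forward pass and before any backward pass:
\begin{itemize}[leftmargin=*,nosep]
\item $\widehat A_t$, $m_t$, $N$, $s_t^{\mathrm{old}}$ and $s_t^{\mathrm{ref}}$ come from rollout data;
\item $s_t$ comes from the learner forward pass.
\end{itemize}
So $\omega_t$ is a closed-form elementwise function of these quantities, which gives the last sentence of the proposition.
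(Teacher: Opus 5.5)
Your proposal is correct and follows essentially the same route as the paper. Freezing rollout, old-policy, and reference quantities makes $\mathcal L$ depend on $\theta$ only through $\{s_t\}$; casewise differentiation of the clipped surrogate yields $\chi_t$; the token-separable K3 term contributes the $\beta\,\partial\mathcal L_{\mathrm{KL}}/\partial s_t$ part; the chain rule then gives $\nabla_\theta\mathcal L=\sum_t\omega_t g_t$; and every ingredient of $\omega_t$ is known after the forward pass. Your adjoint-based handling of the clipping kinks is a more explicit version of the paper's remark that the learner implementation fixes the derivative there, and the paper additionally writes out the K3 derivative $\frac{m_t}{N}[1-\exp(s_t^{\mathrm{ref}}-s_t)]$, but neither difference changes the argument.
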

\end{propositionbox}
Proposition~\ref{prop:exact-decomposition} links learner-state recovery to GRPO coefficients encoding rollout advantage, policy ratio, clipping state, update mask, and per-token KL contribution. Modifying saved activations preserves these coefficients when exact forward log-probabilities and loss-side quantities remain fixed. Appendix~\ref{app:update-semantics} derives the decomposition and specifies exact H/R recovery conditions.

Let $z_u$ be unit $u$'s exact backward state and $\widehat z_u$ its low-precision reconstruction, with $\Delta z_u=\widehat z_u-z_u$. Let $\mathcal J(u)$ contain response positions whose backward paths depend on $z_u$, and $\Delta g_u$ the parameter-gradient change. Let $g_t(z)$ denote the parameter-gradient vector from backpropagation with unit upstream derivative at $s_t$, with unit $u$ consuming $z$ and other saved states exact. Thus $g_t(z_u)=\nabla_\theta s_t$. Under Theorem~\ref{thm:exposure-bound}'s smoothness assumptions, perturbing only $u$ changes the parameter gradient by
\begin{equation}
\Delta g_u=\sum\nolimits_{t\in\mathcal J(u)}\omega_tG_{t,u}\Delta z_u+r_u,
\qquad
\|r_u\|_2\le\frac{1}{2}\mathcal H_u\|\Delta z_u\|_2^2.
\label{eq:gradient-perturbation}
\end{equation}
where $G_{t,u}=\left.\partial g_t(z)/\partial z\right|_{z=z_u}$ is the state-to-gradient Jacobian and $\mathcal H_u$ the corresponding second-order sensitivity coefficient. Eq.~\ref{eq:gradient-perturbation} identifies the state-level quantity required for update-conditioned fidelity. We define the ideal \textbf{policy-update exposure} and its operational proxy as
\begin{equation}
\mathcal E_u=\sum\nolimits_{t\in\mathcal J(u)}|\omega_t|\|G_{t,u}\|_{\mathrm{op}},
\qquad
\widehat{\mathcal E}_u=\sum\nolimits_{t\in\mathcal J(u)}|\omega_t|\alpha_{t,u}.
\label{eq:update-exposure}
\end{equation}
where $\alpha_{t,u}\ge0$ is an empirical token weight within the structural support. The evaluated MLP proxy derives these weights from tokenwise activation magnitudes, as specified in Appendix~\ref{app:exposure-calibration}. Theorem~\ref{thm:exposure-bound} bounds gradient deviation using the ideal exposure $\mathcal E_u$:
\begin{equation}
\|\Delta g_u\|_2\le\mathcal E_u\|\Delta z_u\|_2+\frac{1}{2}\mathcal H_u\|\Delta z_u\|_2^2.
\label{eq:exposure-bound-main}
\end{equation}
For a fixed perturbation, Eq.~\ref{eq:exposure-bound-main} bounds gradient deviation through update-dependent $\mathcal E_u$ and a second-order residual, motivating state-specific fidelity allocation. \method computes exact update coefficients and obtains $\mathcal J(u)$ by graph reachability (Appendix~\ref{app:graph-attribution}). Appendix~\ref{app:exposure-calibration} specifies the activation-based token weights used to score the evaluated full-microbatch MLP units.

\begin{remarkbox}
\begin{remark}[Zero policy-gradient contributions]
\label{rem:near-zero-exposure}
For $\beta=0$, masked positions, zero advantages, and flat clipped policy-loss branches yield $\omega_t=0$. If coefficients vanish throughout $\mathcal J(u)$, any perturbation preserving well-defined backward execution leaves the gradient unchanged (Corollary~\ref{cor:zero-exposure}). With KL regularization, unmasked positions with zero policy-loss coefficients can retain a KL contribution; masked positions remain zero under masked KL.
\end{remark}
\end{remarkbox}

\subsection{Graph-Attributed Recovery Units}
\label{sec:recovery-units}

\method forms graph-valid recovery units from graph lineage, module ownership, and complete state bundles required by backward dependencies. Accounting for fused operations, shared inputs, checkpoint boundaries, and storage aliasing ensures executable, semantically complete decisions (Appendix~\ref{app:graph-attribution}). Candidates enter allocation as \textbf{recomputation-eliminating} units only when profiling confirms that materializing their complete bundles removes the attributed recomputation paths. Profiling establishes useful savings; current-update selection determines which bundles backward consumes. Each unit admits feasible actions $\mathcal A_u\subseteq\{\Hact,\Lact,\Ract\}$.

\Hact and \Ract introduce no approximation distortion and are evaluated by recovery utility and memory cost. \Lact introduces approximation with an update-conditioned fidelity cost, quantified in Sec.~\ref{sec:utility-risk} from exposure (Sec.~\ref{sec:exposure}). Computational value and approximation risk are assessed separately.

\subsection{Recovery Utility and Approximation Risk}
\label{sec:utility-risk}

For fixed $\mathbf q$, let $\mathbf a^R$ recompute every unit after candidate capture, and $\mathbf a^{R\rightarrow a_u}$ change only unit $u$ to an available non-R action $a$. Conditional recovery utility and marginal peak-memory cost are
\begin{equation}
\begin{aligned}
\Delta\tau_u(a\mid\mathbf q)&=T_{\mathrm{upd}}(\mathbf q,\mathbf a^R)-T_{\mathrm{upd}}(\mathbf q,\mathbf a^{R\rightarrow a_u}),\\
\Delta m_u(a\mid\mathbf q)&=\left[M_{\mathrm{peak}}(\mathbf q,\mathbf a^{R\rightarrow a_u})-M_{\mathrm{peak}}(\mathbf q,\mathbf a^R)\right]_+.
\end{aligned}
\label{eq:recovery-utility}
\end{equation}
Here $[x]_+=\max(x,0)$. Both executions share $\mathbf q$; utility includes subsequent encoding and reconstruction. Complete actor-update timing also includes capture of every candidate, even those discarded. Appendix~\ref{app:recovery-utility} relates this conditional utility to net GC savings.

To convert exposure into a fidelity cost for \Lact, \method calibrates at initialization on independent microbatches $\mathcal C$. For each \Lact-eligible unit $u$ and batch $b\in\mathcal C$, matched passes differ only in unit $u$'s low-precision recovery; other states remain exact. $z_{u,b}$ and $\widehat z_{u,b}$ denote exact and reconstructed states, and $g_{\mathrm{ex},b}$ and $g_{L,u,b}$ the exact and single-unit low-precision gradients. We measure
\begin{equation}
e_{u,b}^L=\|\widehat z_{u,b}-z_{u,b}\|_2/(\|z_{u,b}\|_2+\varepsilon),
\qquad
y_{u,b}=\sqrt{\|g_{L,u,b}-g_{\mathrm{ex},b}\|_2^2/\max\{\|g_{\mathrm{ex},b}\|_2^2,\varepsilon_{\mathrm{num}}\}}.
\label{eq:calibration-metrics}
\end{equation}
where $\varepsilon>0$ and $\varepsilon_{\mathrm{num}}>0$ stabilize reconstruction-error and gradient-error normalization, respectively, and are distinct from the GRPO clipping radius $\epsilon$. The relative-error bound in Appendix~\ref{app:exposure-risk-theory} motivates combining exposure and reconstruction error, with calibrated susceptibility capturing residual sensitivity and normalization scales. For unit $u$, let $c(u)$ denote its state/layer group and define $x_{u,b}=\widehat{\mathcal E}_{u,b}e_{u,b}^L$. We estimate susceptibility by
\begin{equation}
\kappa_c=\operatorname*{arg\,min}\nolimits_{\kappa\ge0}\sum\nolimits_{(u,b):\,c(u)=c}\left(y_{u,b}-\kappa x_{u,b}\right)^2.
\label{eq:susceptibility}
\end{equation}
Calibrated susceptibility $\kappa_{c(u)}$ captures residual sensitivity beyond the exposure proxy. Initialization fixes $\kappa_{c(u)}$ and the mean reconstruction error $\bar e_u^L=|\mathcal C|^{-1}\sum_{b\in\mathcal C}e_{u,b}^L$. During training, \method combines them with current-update exposure to compute $\psi_u=\kappa_{c(u)}\widehat{\mathcal E}_u\bar e_u^L$, the calibrated risk surrogate used as \Lact's fidelity cost. For the evaluated full-microbatch MLP units, tokenwise weights aggregate the current update coefficients into a single recovery score:
\begin{equation}
\psi_{u,b}=\kappa_{c(u)}\bar e_u^L\sum\nolimits_{t\in\mathcal T_b}|\omega_t|\alpha_{t,u,b}.
\label{eq:token-weighted-risk}
\end{equation}
Weights vary across tokens and units; identical update coefficients can yield different exposures across saved tensors. Each tensor receives one H/L/R action across all tokens. Token heterogeneity informs action scoring; selection also considers measured recovery utility and memory cost.

\subsection{Joint H/L/R Allocation}
\label{sec:joint-allocation}

For a fixed microbatch with candidate plan $\mathbf q$ and surrogate budget $\delta_b$, set $\Delta\tau_u(\Ract\mid\mathbf q)=\Delta m_u(\Ract\mid\mathbf q)=0$ and define $\widehat d_u(\Hact)=\widehat d_u(\Ract)=0$ and $\widehat d_u(\Lact)=\psi_u$. The first-order triangle bound in Lemma~\ref{lem:joint-risk} motivates aggregating these calibrated risks as $\widehat D(\mathbf a)=\sum_{u=1}^{U}\widehat d_u(a_u)$. Complete-update validation checks the joint gradient error (Appendix~\ref{app:allocation-validation}). We select recovery actions $a_u\in\mathcal A_u(q_u)$ by solving:
\begin{equation}
\max_{\mathbf a}\ \sum\nolimits_{u=1}^{U}\Delta\tau_u(a_u\mid\mathbf q)
\quad\mathrm{s.t.}
\sum\nolimits_{u=1}^{U}\Delta m_u(a_u\mid\mathbf q)\le B-M_{\mathrm{peak}}(\mathbf q,\mathbf a^R),\quad
\widehat D(\mathbf a)\le\delta_b.
\label{eq:joint-allocation}
\end{equation}

\begin{wrapfigure}{r}{0.45\columnwidth}
\vspace{-0.32in}
\begin{minipage}{\linewidth}
\begin{algorithm}[H]
\caption{\method Recovery Allocation}
\label{alg:hilore}
\begingroup
\footnotesize
\linespread{1}\selectfont
\algrenewcommand\algorithmicindent{0.95em}
\begin{algorithmic}[1]
\Require Profiled plans $\mathcal Q$, calibrated $\kappa,\bar e^L$, budgets $B,\delta$
\For{each actor update}
    \State $\textsc{ZeroGrad}()$; form microbatches $\{b\}$
    \State Assign $\{\delta_b\}$ with $\sum_b\delta_b\le\delta$
    \For{each microbatch $b$}
        \State $\mathbf q_b\gets\textsc{Plan}(\mathcal Q,\operatorname{shape}(b),B)$
        \State $(\mathcal L_b,s_b,\alpha_b)\gets\textsc{Forward}(b,\mathbf q_b)$
        \State $\omega_b\gets\textsc{Coefficients}(s_b,b)$
        \State $\widehat{\mathcal E}_b\gets\textsc{Exposure}(\omega_b,\alpha_b)$
        \State $\psi_b\gets\kappa\widehat{\mathcal E}_b\bar e^L$
        \State $\mathbf a_b\gets\textsc{Allocate}(\mathbf q_b,\psi_b;B,\delta_b)$
        \State $\textsc{Backward}(\mathcal L_b,\mathbf q_b,\mathbf a_b)$
    \EndFor
    \State $\textsc{OptimizerStep}()$
\EndFor
\end{algorithmic}
\endgroup
\end{algorithm}
\end{minipage}
\vspace{-0.2in}
\end{wrapfigure}

Eq.~\ref{eq:joint-allocation} uses additive utility, memory, and risk estimates. Conservatively discretizing memory and risk yields a two-constraint multiple-choice knapsack, which dynamic programming solves exactly on the grid (Appendix~\ref{app:allocation-validation}). Validation measures complete capture-and-recovery peak memory and accumulated gradient error, capturing shared recomputation, kernel interactions, and multi-unit low-precision effects; non-\Ract actions with non-positive leave-one-out runtime utility are removed. Algorithm~\ref{alg:hilore} summarizes scheduling. \method reuses validated plans, profiled utilities, and initialization-calibrated statistics. Each microbatch refreshes GRPO coefficients and risk scores after forward, then selects actions over available representations.

\textbf{Complexity.}
Graph attribution, utility profiling, and perturbation calibration are amortized. Calibrating $U$ units over $C$ microbatches requires $O(CU)$ evaluations. $V$ counts activation elements reduced for token weights; $I$, token-to-unit incidences. Weight collection costs $O(V)$; coefficient and score refresh, $O(|\mathcal T|+I+U)$; weight storage, $O(I)$. Allocation over $K_B$ memory and $K_\delta$ fidelity states costs $O(UK_BK_\delta)$. Actor-update measurements include collection, refresh, and allocation.

\section{Experiments}
\label{sec:experiments}

We evaluate whether \method converts available memory into faster GRPO updates without sacrificing update fidelity, whether policy-update exposure reliably identifies approximation-sensitive states, and whether these gains persist across budgets, workloads, response lengths, and models.

\subsection{Experimental Setup}
\label{sec:experimental-setup}

\textbf{Models \& Datasets.}
We use Qwen2.5-3B-Instruct~\citep{yang2024qwen25} as primary and Phi-3.5-mini-instruct~\citep{abdin2024phi3} and Llama-3.1-8B-Instruct~\citep{grattafiori2024llama3} for transfer. GRPO trains on DeepMath10K~\citep{he2025deepmath}, TACO-Verified~\citep{li2023taco}, and Logic-RL Knights-and-Knaves~\citep{xie2025logicrl}, evaluated respectively on MATH500/GSM8K~\citep{hendrycks2021math,cobbe2021training}, LiveCodeBench/MBPP+~\citep{jain2024livecodebench,liu2023evalplus}, and held-out Knights-and-Knaves/ZebraLogic~\citep{xie2025logicrl,lin2025zebralogic}.

\textbf{Baselines.}
Baselines cover GC / All-R~\citep{chen2016training} and No-GC / All-H endpoints, rematerialization (SAC, Rockmate~\citep{zhao2023rockmate}), compression (GACT~\citep{liu2022gact}, ALAM~\citep{woo2024alam}, CompAct~\citep{shamshoum-etal-2025-compact}, PRAC~\citep{li2026prac}, AGoQ~\citep{lin2026agoq}, INSTANT~\citep{Doan2026INSTANT}), and hybrid compression/recomputation (Adacc~\citep{chen2025adacc}). We compare methods under shared execution and GRPO replays, measuring memory--throughput--fidelity trade-offs. Configurations and tuning are detailed in Appendix~\ref{app:experimental-setup}.

\textbf{Protocol \& Metrics.} Defaults: rank-16 LoRA on four L40S 48-GB GPUs. Training accumulates gradients across all microbatches before one optimizer step per on-policy rollout batch. Matched held-out replays at the pre-update learner state measure actor-update efficiency and gradient fidelity; efficiency timing includes all online method overhead. Post-update diagnostic replays evaluate risk prediction with fixed old-policy likelihoods (Appendix~\ref{app:exposure-details}). These diagnostics preserve the training trajectory and are timed separately. Memory is normalized by GC's per-GPU actor-update peak, $B_{\mathrm{GC}}$. Configuration selection favors throughput subject to shared memory budgets, a mean relative gradient-error tolerance $\epsilon_g=0.015$, and a terminal validation score at most one percentage point below GC. Five paired timing repetitions yield gain SDs; downstream scores report means and SDs across three paired on-policy training seeds. Appendix~\ref{app:experimental-setup} details protocols and metric definitions.

\subsection{Main Results}
\label{sec:canonical-results}

\begin{table*}[t]
\centering
\setlength{\aboverulesep}{1.5pt}
\setlength{\belowrulesep}{1.5pt}
\setlength{\fboxsep}{0.5pt}
\caption{Actor-update efficiency, gradient fidelity, and downstream quality on DeepMath10K with 2K responses. Red and orange indicate the \resultbest{best} and \colorbox{orange!15}{second-best} values within each model.}
\label{tab:main-results}
\scriptsize
\setlength{\tabcolsep}{3pt}
\renewcommand{\arraystretch}{0.75}
\resizebox{\linewidth}{!}{%
\begin{tabular}{@{}c|c|ccc|c|cc@{}}
\toprule
\multirow{2}{*}{\textbf{Model}} & \multirow{2}{*}{\textbf{Method}} & \multicolumn{3}{c|}{\textbf{Efficiency}} & \multicolumn{1}{c|}{\textbf{Gradient Fidelity}} & \multicolumn{2}{c@{}}{\textbf{Training Quality}} \\
\cmidrule(lr){3-5}\cmidrule(lr){6-6}\cmidrule(l){7-8}
& & Peak MiB $\downarrow$ & Tok./s $\uparrow$ & Gain (\%) $\uparrow$ & Grad. err. $\downarrow$ & MATH500 $\uparrow$ & GSM8K $\uparrow$ \\
\midrule
\multirow{8}{*}{\rotatebox[origin=c]{90}{\textbf{Qwen2.5-3B\quad}}}
& GC / All-R\venuetag{arXiv'16} & 28,874 & 2,455.72 & -- & \resultbest{0.0000} & \resultsecond{60.20_{\pm1.00}} & \resultsecond{77.33_{\pm0.15}} \\
& Rockmate\venuetag{ICML'23} & 29,992 & 2,066.73 & $-15.84_{\pm0.79}$ & \resultsecond{0.0031} & \resultsecond{60.20_{\pm0.80}} & $76.90_{\pm0.04}$ \\
& ALAM\venuetag{ICLR'24} & \resultbest{28{,}595} & 2,381.56 & $-3.02_{\pm0.57}$ & 0.0131 & $59.13_{\pm0.50}$ & $76.57_{\pm0.23}$ \\
& Adacc\venuetag{arXiv'25} & 31,400 & \resultsecond{2{,}579.00} & \resultsecond{+5.02_{\pm0.49}} & 0.0107 & $60.00_{\pm0.40}$ & $77.08_{\pm0.12}$ \\
& INSTANT\venuetag{ICLR'26} & 29,236 & 2,515.39 & $+2.43_{\pm0.47}$ & 0.0105 & $59.80_{\pm0.40}$ & $77.00_{\pm0.12}$ \\
& AGoQ\venuetag{ICML'26} & \resultsecond{28{,}776} & 2,533.81 & $+3.18_{\pm0.56}$ & 0.0110 & $59.47_{\pm0.42}$ & $76.70_{\pm0.19}$ \\
& PRAC\venuetag{ICML'26} & 29,455 & 2,523.50 & $+2.76_{\pm0.55}$ & 0.0112 & $59.67_{\pm0.31}$ & $76.88_{\pm0.15}$ \\
& \textbf{\method (Ours)} & 31,649 & \resultbest{2{,}705.22} & \resultbest{+10.16_{\pm0.44}} & 0.0104 & \resultbest{60.73_{\pm0.31}} & \resultbest{77.84_{\pm0.12}} \\
\midrule
\multirow{8}{*}{\rotatebox[origin=c]{90}{\textbf{Phi-3.5-mini}\quad}}
& GC / All-R\venuetag{arXiv'16} & 17,607 & 2,610.33 & -- & \resultbest{0.0000} & $44.00_{\pm0.60}$ & $85.97_{\pm0.23}$ \\
& Rockmate\venuetag{ICML'23} & 18,290 & 2,325.54 & $-10.91_{\pm0.61}$ & \resultsecond{0.0031} & $43.93_{\pm0.50}$ & $85.85_{\pm0.23}$ \\
& ALAM\venuetag{ICLR'24} & \resultbest{17{,}476} & 2,562.30 & $-1.84_{\pm0.54}$ & 0.0125 & $43.40_{\pm0.80}$ & $85.37_{\pm0.30}$ \\
& Adacc\venuetag{arXiv'25} & 18,963 & \resultsecond{2{,}725.71} & \resultsecond{+4.42_{\pm0.51}} & 0.0107 & \resultsecond{44.07_{\pm0.50}} & $85.92_{\pm0.19}$ \\
& INSTANT\venuetag{ICLR'26} & 17,818 & 2,716.57 & $+4.07_{\pm0.50}$ & 0.0102 & $43.80_{\pm0.60}$ & $85.82_{\pm0.23}$ \\
& AGoQ\venuetag{ICML'26} & \resultsecond{17{,}519} & 2,721.53 & $+4.26_{\pm0.56}$ & 0.0108 & $43.40_{\pm0.60}$ & \resultbest{86.25_{\pm0.27}} \\
& PRAC\venuetag{ICML'26} & 18,032 & 2,710.57 & $+3.84_{\pm0.52}$ & 0.0109 & $43.80_{\pm0.60}$ & $85.75_{\pm0.30}$ \\
& \textbf{\method (Ours)} & 19,322 & \resultbest{2{,}940.28} & \resultbest{+12.64_{\pm0.47}} & 0.0091 & \resultbest{44.53_{\pm0.50}} & \resultsecond{86.10_{\pm0.19}} \\
\midrule
\multirow{8}{*}{\rotatebox[origin=c]{90}{\textbf{Llama-3.1-8B}\quad}}
& GC / All-R\venuetag{arXiv'16} & 39,601 & 1,494.23 & -- & \resultbest{0.0000} & $55.00_{\pm0.60}$ & \resultsecond{82.87_{\pm0.23}} \\
& Rockmate\venuetag{ICML'23} & 41,000 & 1,389.63 & $-7.00_{\pm0.70}$ & \resultsecond{0.0031} & \resultbest{55.73_{\pm0.61}} & $82.71_{\pm0.23}$ \\
& ALAM\venuetag{ICLR'24} & \resultbest{39{,}214} & 1,482.87 & $-0.76_{\pm0.52}$ & 0.0128 & $54.33_{\pm0.61}$ & $82.41_{\pm0.30}$ \\
& Adacc\venuetag{arXiv'25} & 42,900 & \resultsecond{1{,}589.56} & \resultsecond{+6.38_{\pm0.51}} & 0.0107 & $54.93_{\pm0.50}$ & $82.79_{\pm0.23}$ \\
& INSTANT\venuetag{ICLR'26} & 40,050 & 1,565.36 & $+4.76_{\pm0.50}$ & 0.0105 & $54.80_{\pm0.60}$ & $82.71_{\pm0.23}$ \\
& AGoQ\venuetag{ICML'26} & \resultsecond{39{,}463} & 1,517.84 & $+1.58_{\pm0.56}$ & 0.0110 & $54.60_{\pm0.60}$ & $82.56_{\pm0.23}$ \\
& PRAC\venuetag{ICML'26} & 40,712 & 1,514.55 & $+1.36_{\pm0.50}$ & 0.0114 & $54.73_{\pm0.50}$ & $82.64_{\pm0.23}$ \\
& \textbf{\method (Ours)} & 43,403 & \resultbest{1{,}695.50} & \resultbest{+13.47_{\pm0.48}} & 0.0100 & \resultsecond{55.53_{\pm0.31}} & \resultbest{83.09_{\pm0.23}} \\
\bottomrule
\end{tabular}}
\vspace{-0.2in}
\end{table*}

Table~\ref{tab:main-results} compares actor-update efficiency, gradient fidelity, and training quality on DeepMath10K with 2K responses. Methods share a per-model memory ceiling of $B=1.10B_{\mathrm{GC}}$.

\begin{wrapfigure}{r}{0.4\linewidth}
\centering
\vspace{-0.16in}
\includegraphics[width=\linewidth]{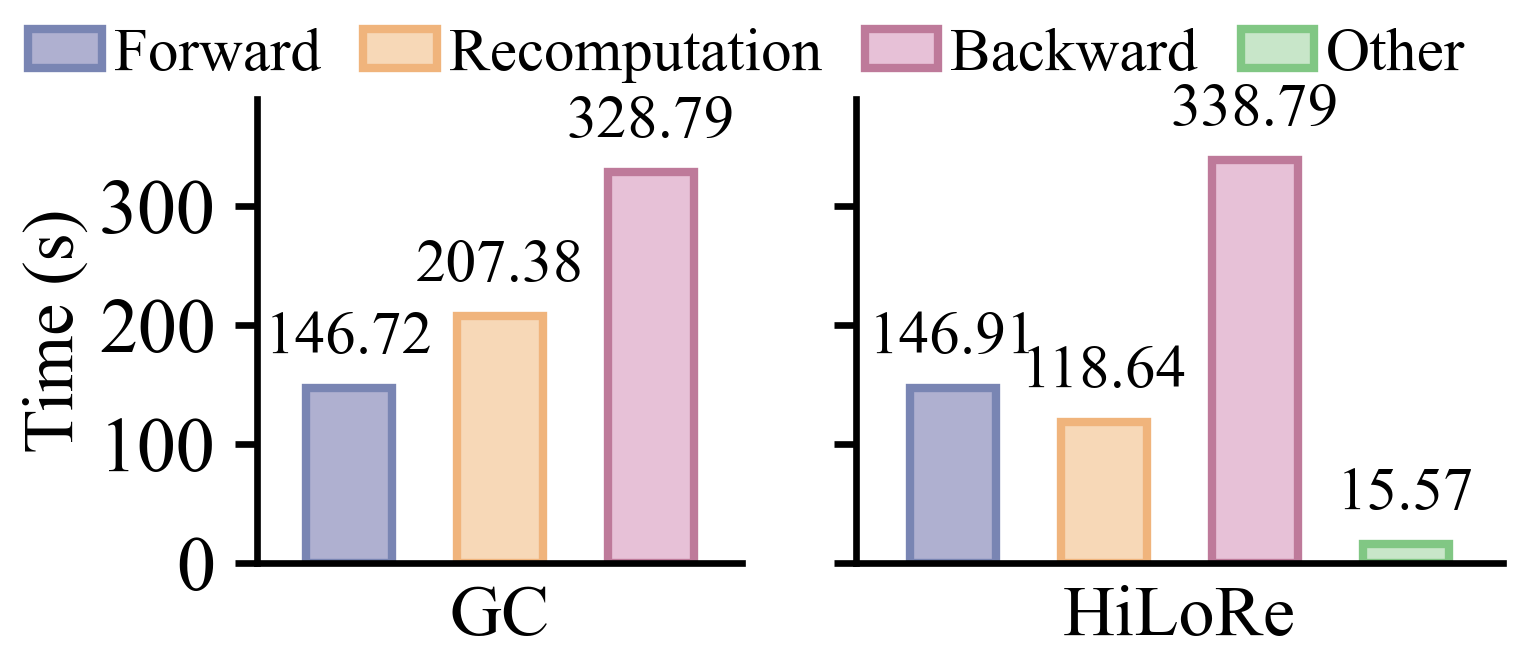}
\vspace{-0.37in}
\caption{Actor-update time breakdown.}
\label{fig:runtime-breakdown}
\vspace{-0.3in}
\end{wrapfigure}

\textbf{Higher Throughput Within Budget.} The comparison covers GQA models (Qwen2.5-3B and Llama-3.1-8B) and an MHA model (Phi-3.5-mini). On Qwen2.5-3B, \method achieves the highest throughput among compared configurations, exceeding GC and Adacc by 10.16\% and 4.89\%, respectively. This advantage also holds against SAC, GACT, and CompAct (Appendix~\ref{app:full-baseline-comparison}).

\textbf{Reduced Recomputation With High Gradient Fidelity.} Fig.~\ref{fig:runtime-breakdown} shows \method reduces recomputation time on Qwen2.5-3B from 207.38 to 118.64 seconds (42.79\%). Other (15.57 seconds) comprises encoding, reconstruction, exposure refresh, and allocation. Total component time decreases by 9.22\% (Table~\ref{tab:overhead}). Mean held-out relative gradient error is 0.0104, below the 0.015 threshold.

\begin{figure*}[h]
\centering
\begin{minipage}[t]{0.52\textwidth}
\vspace{0pt}
\centering
\captionof{table}{Cross-task efficiency and training quality.}
\label{tab:cross-task-system}
\label{tab:cross-task-timing}
\scriptsize
\renewcommand{\arraystretch}{1}
\setlength{\tabcolsep}{2.5pt}
\resizebox{\linewidth}{!}{%
\begin{tabular}{@{}lrrcc@{}}
\toprule
Method & Peak MiB $\downarrow$ & Gain (\%) $\uparrow$ & Metric 1 $\uparrow$ & Metric 2 $\uparrow$ \\
\midrule
\rowcolor{qualitypink}
\multicolumn{5}{@{}c@{}}{\textbf{TACO-Verified: LiveCodeBench / MBPP+}} \\
GC & 29,578 & -- & $23.38_{\pm0.30}$ & $61.20_{\pm0.40}$ \\
AGoQ & 29,400 & $-0.42_{\pm0.58}$ & $22.91_{\pm0.29}$ & $60.76_{\pm0.40}$ \\
Adacc & 32,080 & $+4.63_{\pm0.52}$ & $23.19_{\pm0.33}$ & $61.02_{\pm0.40}$ \\
\rowcolor{qualitygreen}
\textbf{\method} & 32,394 & $\mathbf{+9.00_{\pm0.47}}$ & $23.32_{\pm0.41}$ & $61.29_{\pm0.40}$ \\
\midrule
\rowcolor{qualityblue}
\multicolumn{5}{@{}c@{}}{\textbf{Logic-RL K\&K: held-out K\&K / ZebraLogic}} \\
GC & 29,411 & -- & $39.07_{\pm0.46}$ & $14.83_{\pm0.31}$ \\
AGoQ & 29,246 & $-0.88_{\pm0.54}$ & $38.60_{\pm0.40}$ & $14.47_{\pm0.25}$ \\
Adacc & 31,898 & $+5.37_{\pm0.50}$ & $38.80_{\pm0.35}$ & $14.73_{\pm0.25}$ \\
\rowcolor{qualitygreen}
\textbf{\method} & 32,186 & $\mathbf{+8.64_{\pm0.45}}$ & $39.13_{\pm0.31}$ & $14.93_{\pm0.21}$ \\
\bottomrule
\end{tabular}
}
\end{minipage}
\begin{minipage}[t]{0.43\textwidth}
\vspace{0.15in}
\centering
\includegraphics[width=\linewidth]{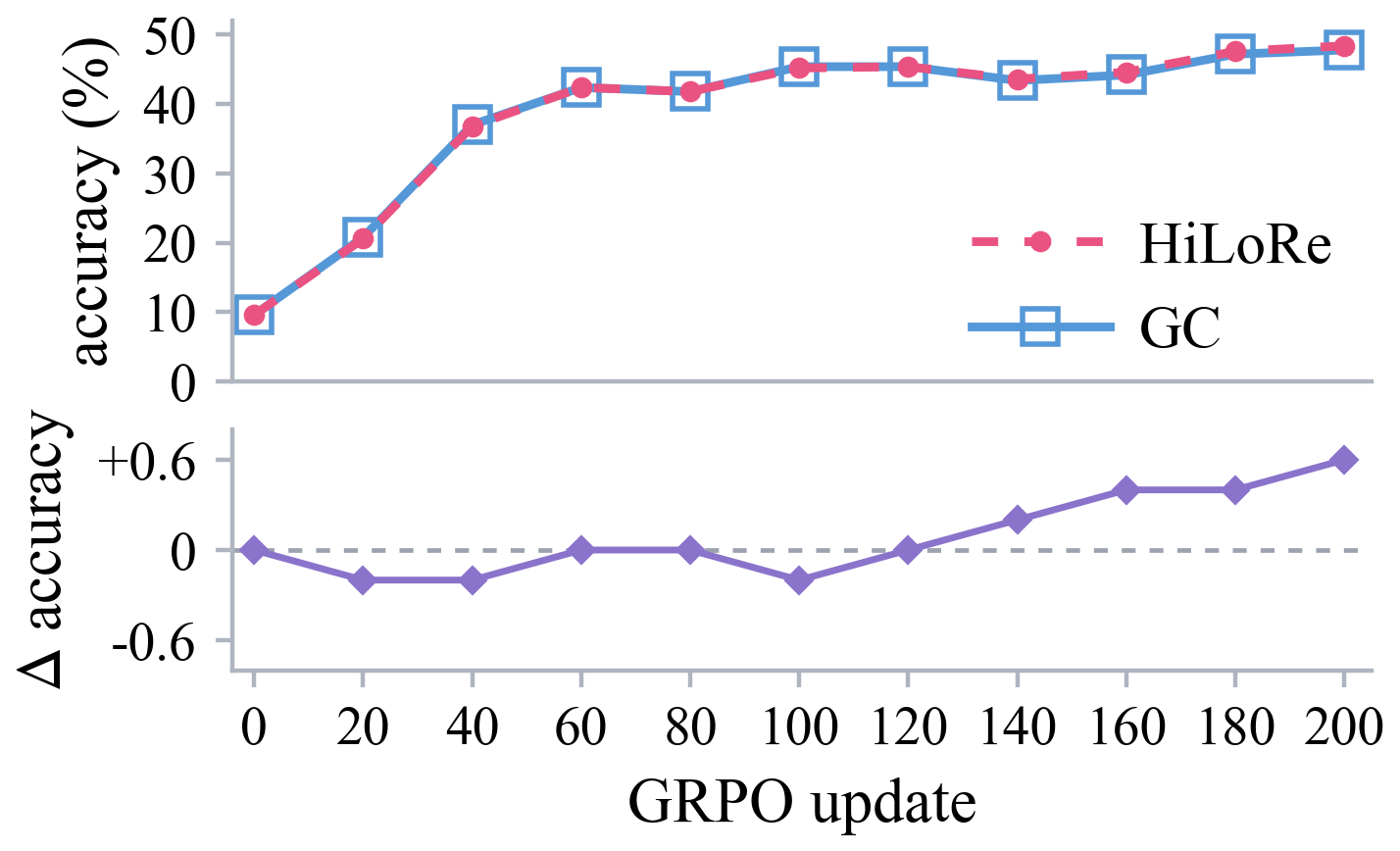}
\vspace{-0.35in}
\caption{MATH500 accuracy and \method--GC gap during GRPO training.}
\label{fig:dsr-learning}
\end{minipage}
\vspace{-0.1in}
\end{figure*}

\textbf{Generalization Across Tasks, Lengths, and Models.} At $B=1.10B_{\mathrm{GC}}$, \method leads evaluated configurations in throughput on TACO-Verified and Logic-RL K\&K, exceeding GC by 9.00\% and 8.64\%, respectively (Table~\ref{tab:cross-task-system}). Paired mean downstream-score differences from GC remain within 0.10 percentage points across four benchmarks. Across 1K/2K/4K responses, Qwen/Phi throughput gains span 6.13--13.57\%/4.73--12.64\% (Appendix~\ref{app:scaling-details}). At 4K and $B=1.05B_{\mathrm{GC}}$, three 3B--8B GQA/MHA models gain 8.39--10.18\% with 0.0090--0.0100 mean gradient errors (Fig.~\ref{fig:length-model-transfer}).

\textbf{Full-Parameter Transfer.}
We evaluate GC and \method using Llama-3.2-3B-Instruct on the 1,209-example DSR-subset with full-parameter GRPO, 3K responses, and seed 1234. Over 200 updates, GC improves MATH500 accuracy from 9.60\% to 47.80\%, while \method reaches 48.40\%. Across evaluated checkpoints, \method--GC differences span $-0.20$ to $+0.60$ percentage points (Fig.~\ref{fig:dsr-learning}), supporting learning quality close to GC. An efficiency evaluation on an A100 80GB jointly varies microbatch size and recovery allocation under budgets of $1.00$--$1.20B_{\mathrm{GC}}$ (Fig.~\ref{fig:full-parameter-runtime}). With additional memory headroom, \method achieves higher actor-update throughput than GC with budget-specific microbatch tuning, extending memory--throughput benefits to full-parameter training.

\subsection{Memory--Throughput Frontier}
\label{sec:frontier}

We sweep $B/B_{\mathrm{GC}}\in\{1.00,1.05,1.10,1.15,1.20\}$ on Qwen2.5-3B / DeepMath10K with 2K responses, $B_{\mathrm{GC}}=28{,}874$ MiB, and 4096-token packing; budgets cap measured peak memory. \circnum{1} \textbf{\method leads mean throughput across budgets.} Fig.~\ref{fig:frontier}(a) shows gains over GC rising from 3.47\% to 13.97\%, with margins over Adacc widening from 2.64 to 6.56 percentage points. \circnum{2} \textbf{Recovery shifts toward retention with errors below threshold.} In Fig.~\ref{fig:frontier}(b,c), mean held-out gradient error rises from 0.0068 to 0.0130 as H increases from 0\% to 25.0\% and R falls from 91.7\% to 61.1\%; L peaks at 16.7\%, declining to 13.9\%. Maximum observed error, 0.0141, remains below the 0.015 development threshold (Appendix~\ref{app:frontier-details}). At GC's budget, 8.3\% of units use L (28,843-MiB peak). \circnum{3} \textbf{Adaptive recovery increases throughput using headroom.} GC leaves 17,194 MiB unused on the 46,068-MiB device (Fig.~\ref{fig:teaser}a). Table~\ref{tab:headroom-control} compares increased packing with adaptive recovery. On Qwen, \method gains 13.97\% using 10,892 MiB less than GC with 10240-token packing (+10.80\%). On Llama, GC gains 4.00\% with 6144-token packing; 7168-token packing triggers OOM. With 4096-token packing, \method gains 13.47\% at $1.096B_{\mathrm{GC}}$, below GC's peak with 6144-token packing ($1.139B_{\mathrm{GC}}$). Allocation exploits headroom without increasing packing.

\begin{figure}[t]
\centering
\includegraphics[width=\linewidth]{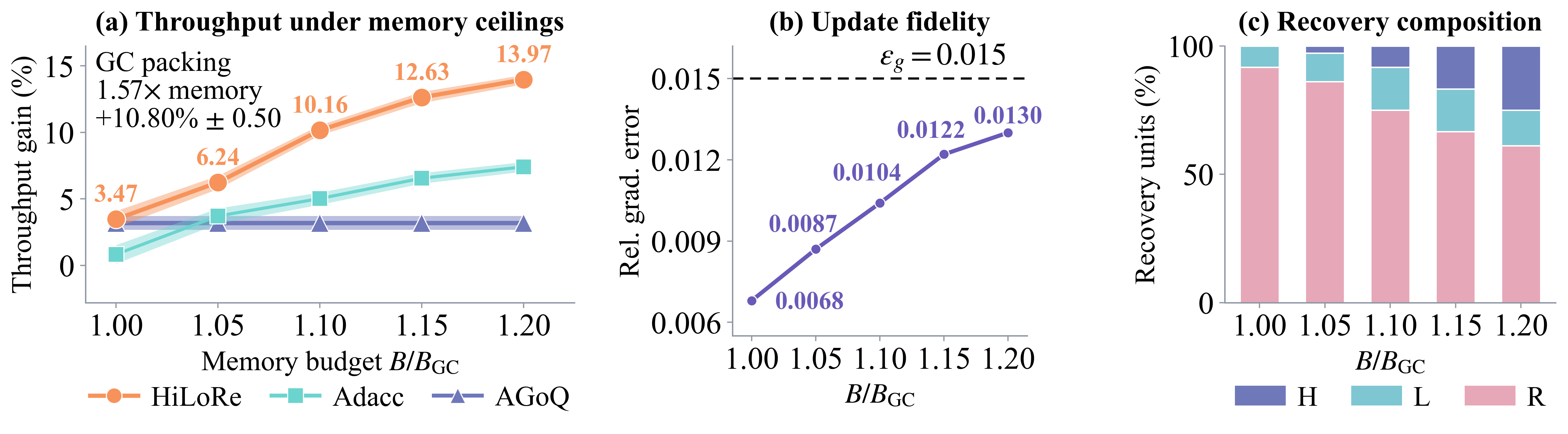}
\vspace{-0.4in}
\caption{Memory--throughput frontier, update fidelity, and recovery allocation.}
\label{fig:frontier}
\vspace{-0.1in}
\end{figure}

\begin{table*}[t]
\centering
\begin{minipage}[t]{0.5\textwidth}
\vspace{0pt}
\centering
\setlength{\abovecaptionskip}{2pt}
\setlength{\belowcaptionskip}{3pt}
\caption{Packing versus adaptive recovery.}
\label{tab:headroom-control}
\scriptsize
\setlength{\tabcolsep}{2pt}
\setlength{\aboverulesep}{1pt}
\setlength{\belowrulesep}{1pt}
\renewcommand{\arraystretch}{0.94}
\resizebox{\linewidth}{!}{%
\begin{tabular}{@{}clccc@{}}
\toprule
Model & Method & Pack & $M/B_{\mathrm{GC}}$ & Gain (\%) \\
\midrule
\multirow{4}{*}{Qwen2.5-3B} & GC & 4096 & 1.000 & -- \\
& \method$_{1.10}$ & 4096 & 1.096 & $+10.16_{\pm0.44}$ \\
& GC & 10240 & 1.574 & $+10.80_{\pm0.50}$ \\
& \method$_{1.20}$ & 4096 & 1.197 & $+13.97_{\pm0.47}$ \\
\midrule
\multirow{5}{*}{Llama-3.1-8B} & GC & 4096 & 1.000 & -- \\
& \method$_{1.10}$ & 4096 & 1.096 & $+13.47_{\pm0.48}$ \\
& GC & 6144 & 1.139 & $+4.00_{\pm0.55}$ \\
& GC & 7168 & OOM & OOM \\
& \method$_{1.15}$ & 4096 & 1.147 & $+16.24_{\pm0.50}$ \\
\bottomrule
\end{tabular}%
}
\end{minipage}
\hfill
\begin{minipage}[t]{0.47\textwidth}
\vspace{0pt}
\centering
\setlength{\abovecaptionskip}{2pt}
\setlength{\belowcaptionskip}{3pt}
\caption{Recovery-unit and allocation ablations.}
\label{tab:recovery-ablation}
\scriptsize
\setlength{\tabcolsep}{2pt}
\setlength{\aboverulesep}{1pt}
\setlength{\belowrulesep}{1pt}
\renewcommand{\arraystretch}{0.91}
\resizebox{\linewidth}{!}{%
\begin{tabular}{@{}lrrr@{}}
\toprule
Variant & Peak$/B_{\mathrm{GC}}$ & Gain (\%) $\uparrow$ & Grad. err. $\downarrow$ \\
\midrule
\rowcolor{qualitygreen}
\textbf{Full \method} & 1.096 & $\mathbf{+10.16_{\pm0.44}}$ & 0.0104 \\
\quad w/o L & 1.100 & $+6.14_{\pm0.48}$ & 0.0030 \\
\quad Static risk & 1.098 & $+8.31_{\pm0.51}$ & 0.0128 \\
\midrule
\quad Block units & 1.075 & $+2.92_{\pm0.52}$ & 0.0031 \\
\quad Random H & 1.088 & $+3.61_{\pm0.78}$ & 0.0030 \\
\quad Size-based H & 1.093 & $+4.17_{\pm0.57}$ & 0.0031 \\
\quad Uniform H & 1.097 & $+4.84_{\pm0.54}$ & 0.0030 \\
\midrule
All-R (GC) & 1.000 & -- & 0.0000 \\
All-H & OOM & OOM & OOM \\
\bottomrule
\end{tabular}
}
\end{minipage}
\vspace{-0.2in}
\end{table*}

\subsection{Risk Prediction in Post-Update Replays}
\label{sec:exposure-validation}

\begin{figure*}[t]
    \centering
    \includegraphics[width=\textwidth]{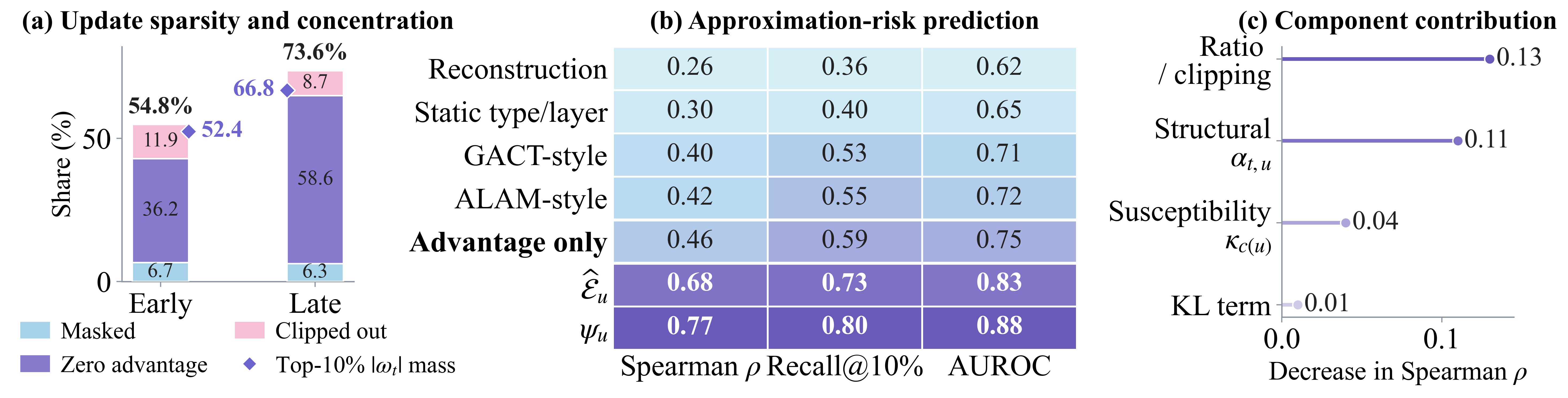}
    \vspace{-0.38in}
    \caption{Update sparsity, risk prediction, and component ablations in post-update diagnostic replays.}
    \label{fig:exposure-validation}
    \vspace{-0.1in}
\end{figure*}

Post-update diagnostic replays evaluate policy-update exposure on Qwen2.5-3B. Replays evaluate rollout batches after the reference learner's single optimizer step, holding old-policy likelihoods fixed. Predictors calibrated on disjoint records use identical held-out single-unit L interventions and relative gradient-error targets. Appendix~\ref{app:exposure-details} details the protocol. \circnum{1} \textbf{Concentrated update contributions.} On DeepMath10K, Fig.~\ref{fig:exposure-validation}(a) reports statistics over padded response slots. The fraction with zero policy-gradient contribution rises from 54.8\% to 73.6\% between replays for updates 9--10 and 11--12. The top 10\% of slots by $|\omega_t|$ carry 52.4--66.8\% of total coefficient mass. \circnum{2} \textbf{Exposure improves risk prediction.} Across DeepMath10K, TACO-Verified, and Logic-RL, Fig.~\ref{fig:exposure-validation}(b) shows macro-averaged Spearman correlations of 0.46 for advantage-only exposure, 0.68 for complete exposure, and 0.77 for $\psi_u$. Incorporating calibrated susceptibility and reconstruction distortion yields the strongest predictor, with 0.80 Top-10\% recall and 0.88 violation AUROC. \circnum{3} \textbf{Contributions to prediction.} On DeepMath10K, removing ratio/clipping information or activation weighting lowers Spearman from 0.78 by 0.13 and 0.11, respectively (Fig.~\ref{fig:exposure-validation}c). Removing susceptibility or the KL contribution lowers it by 0.04 and 0.01. The update-mask control has zero effect under response-validity masking. Appendix~\ref{app:exposure-details} reports recall and AUROC changes.

\vspace{-0.05in}
\subsection{Ablations, Sensitivity, and Operational Analysis}
\label{sec:ablations}

We evaluate recovery design and operating parameters on Qwen2.5-3B / DeepMath10K fixed-workload replay under $B=1.10B_{\mathrm{GC}}$. Defaults use $\delta^\star$, 16 calibration batches, and per-microbatch exposure refresh; gains are relative to paired All-R and include all online recovery costs.

\textbf{Recovery design and update conditioning contribute
distinct gains.}
Table~\ref{tab:recovery-ablation} shows a 6.14\% throughput gain for graph-valid H/R recovery with measured utilities, compared with 2.92--4.84\% for block-level or heuristic alternatives. Static-risk H/L/R achieves an 8.31\% gain over GC, and full \method achieves 10.16\%. Relative to static risk, current-update conditioning increases throughput by 1.71\% and reduces mean relative gradient error from 0.0128 to 0.0104, an 18.75\% reduction. This comparison isolates the incremental benefit of update conditioning within the recovery system.

\begin{wraptable}{r}{0.44\textwidth}
\vspace{-0.3in}
\centering
\caption{Exposure and recovery allocation.}
\label{tab:update-regime-main}
\footnotesize
\renewcommand{\arraystretch}{1}
\setlength{\tabcolsep}{3pt}
\resizebox{0.9\linewidth}{!}{%
\begin{tabular}{@{}lrrr@{}}
\toprule
Update regime & Norm. exposure & H (\%) & L (\%) \\
\midrule
Low $|\widehat A_t|$ & 0.48 & 4.8 & 18.5 \\
High $|\widehat A_t|$ & 1.71 & 14.7 & 9.4 \\
\midrule
Clipped & 0.27 & 2.9 & 21.3 \\
Active unclipped & 1.43 & 12.6 & 11.2 \\
\bottomrule
\end{tabular}}
\vspace{-0.2in}
\end{wraptable}

\textbf{Recovery allocation in diagnostic replays.} Table~\ref{tab:update-regime-main} reports held-out post-update replays where exposure and recovery actions are recomputed at the diagnostic state. H/L fractions follow Appendix~\ref{app:ablation-robustness}'s attribution procedure. H allocation favors high- over low-advantage regions (14.7\% vs 4.8\%), and L allocation favors clipped over active-unclipped regions (21.3\% vs 11.2\%). Isolated L errors average 0.0051 and 0.0115 for clipped and active-unclipped groups.

\textbf{Sensitivity supports default fidelity budget, calibration size, and refresh interval.} Fig.~\ref{fig:hyperparameter-sensitivity} reports 10.16\% higher throughput at $\delta^\star$, with mean gradient error 0.0104 within the 0.015 tolerance. At $2\delta^\star$, gain reaches 10.94\% and mean error 0.0158, exceeding tolerance. At $4\delta^\star$, gain reaches 11.28\% (mean error 0.0181). Calibration plateaus at 16 microbatches: 10.16\% versus 10.18\% with 32. Per-microbatch refresh gains 10.16\%, versus 10.04\% every four microbatches and 8.31\% with static risk. Stars mark defaults; throughput-gain error bars show sample SDs across five paired repetitions.

\begin{figure*}[t]
    \centering
    \includegraphics[width=\textwidth]{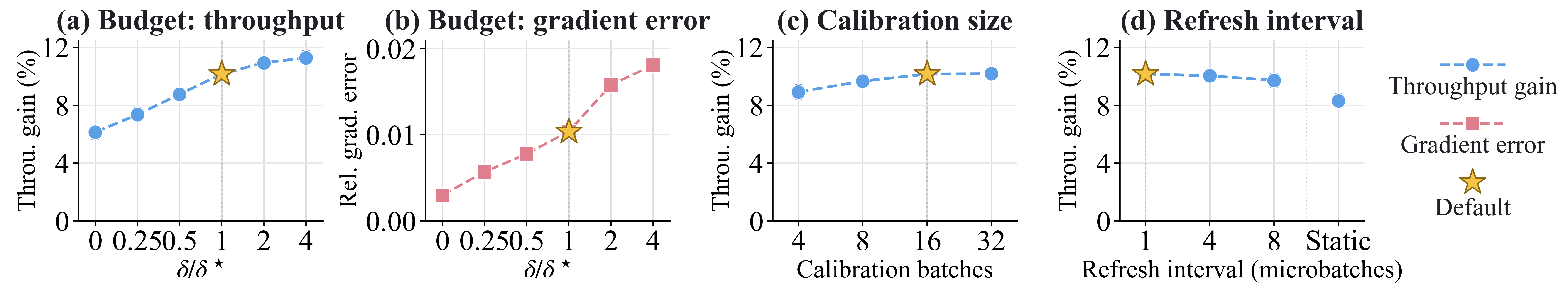}
    \vspace{-0.35in}
    \caption{Sensitivity to the internal fidelity budget $\delta$, calibration size, and risk-refresh interval.}
    \label{fig:hyperparameter-sensitivity}
    \vspace{-0.2in}
\end{figure*}

\vspace{-0.05in}
\section{Conclusion}

\method makes current-update sensitivity a criterion for learner-state recovery in GRPO. It combines policy-update exposure with measured recovery utility to allocate H/L/R actions under memory and calibrated risk budgets. With 2K responses and memory ceiling $1.10B_{\mathrm{GC}}$, actor-update throughput improves up to 13.47\% over GC and 7.87\% over fastest evaluated baseline, with downstream quality close to paired GC. This benefit spans mathematics, code, and logical reasoning across response lengths and architectures. These findings support the principle: recovery precision should follow the active update's sensitivity to saved states, alongside memory and recomputation costs. Future work will examine this principle in longer-context training and broader full-parameter settings.

\clearpage

\subsection*{AI Use Statement}
In this work, we used generative AI tools to assist with checking mathematical derivations, statements, and proofs, as well as for language polishing and refinement. We reviewed all AI-assisted work and verified the mathematical content. We take responsibility for the final content of this work, including the manuscript, mathematical claims and proofs, experimental results, and code.

\subsection*{Ethics Statement}
This work improves GRPO training efficiency using existing models and public datasets. Responsible application requires respecting model and dataset licenses and assessing the safety and potential misuse of downstream models in their intended deployment contexts.

\subsection*{Reproducibility Statement}
Sec.~\ref{sec:method} defines the GRPO objective, policy-update exposure, recovery utility, calibrated risk, and H/L/R allocation. Appendices~\ref{app:recovery-units} and~\ref{app:calibration-scheduling} detail graph attribution, profiling, calibration, scheduling, and validation; Appendix~\ref{app:theory} provides the theoretical assumptions and proofs. Appendix~\ref{app:experimental-setup} specifies data splits, execution settings, baseline implementations, configuration selection, fixed-workload timing, same-state gradient diagnostics, and on-policy training evaluation.

\bibliography{iclr2027_conference}
\bibliographystyle{iclr2027_conference}

\clearpage
\appendix

\begingroup
\hypersetup{linkcolor=black}
\begin{center}
{\Large\bfseries Appendix Contents}
\end{center}
\vspace{0.12in}
\fontsize{10}{16}\selectfont
\setlength{\parskip}{0pt}
\makeatletter
\renewcommand{\@dotsep}{6}
\makeatother

\appendixentry{app:recovery-units}{Recovery-Unit Construction and Profiling}
\appendixsubentry{app:graph-attribution}{Graph Attribution and Recomputation Verification}
\appendixsubentry{app:recovery-utility}{Measured Recovery Utility}
\appendixsubentry{app:recovery-interactions}{Non-Additive Recovery Interactions}

\appendixentry{app:calibration-scheduling}{Calibration and Scheduling Details}
\appendixsubentry{app:exposure-calibration}{Exposure and Low-Precision Calibration}
\appendixsubentry{app:allocation-validation}{Joint Allocation and Schedule Validation}
\appendixsubentry{app:online-scheduling}{Online Scheduling and Complexity}

\appendixentry{app:theory}{Theoretical Analysis}
\appendixsubentry{app:update-semantics}{Update Semantics and Exact Recovery}
\appendixsubentry{app:exposure-risk-theory}{Policy-Update Exposure and Calibrated Risk}

\appendixentry{app:additional-experiments}{Additional Experimental Results}
\appendixsubentry{app:experimental-setup}{Detailed Experimental Setup}
\appendixsubsubentry{app:setup-training-data}{Training Configuration and Data}
\appendixsubsubentry{app:setup-recovery-implementations}{Recovery-Method Implementations}
\appendixsubsubentry{app:setup-configuration-selection}{Configuration Selection}
\appendixsubsubentry{app:setup-efficiency-measurements}{Efficiency and Memory Measurements}
\appendixsubsubentry{app:setup-gradient-fidelity}{Gradient-Fidelity Evaluation}
\appendixsubsubentry{app:setup-downstream-evaluation}{Downstream Training and Evaluation}
\appendixsubentry{app:full-baseline-comparison}{Additional Baselines}
\appendixsubentry{app:exposure-details}{Policy-Update Exposure Details}
\appendixsubentry{app:frontier-details}{Canonical Memory--Throughput Frontier}
\appendixsubentry{app:generalization}{Generalization Across Tasks, Lengths, and Models}
\appendixsubsubentry{app:system-robustness}{Cross-Task Transfer}
\appendixsubsubentry{app:scaling-details}{Response-Length and Model Transfer}
\appendixsubsubentry{app:quality-full-parameter-transfer}{Training Quality and Full-Parameter Transfer}
\appendixsubsubentry{app:full-parameter-budget-planning}{Full-Parameter Memory--Throughput Trade-offs}
\appendixsubentry{app:shared-tolerance}{Sensitivity to the Shared Gradient-Error Tolerance}
\appendixsubentry{app:ablation-robustness}{Ablation, Sensitivity, and Runtime Details}

\appendixunnumberedentry{app:statements}{Limitations}
\endgroup

\vspace{0.15in}
\noindent Appendix~\ref{app:recovery-units} describes recovery-unit construction, recomputation verification, and utility profiling. Appendix~\ref{app:calibration-scheduling} specifies exposure calibration, risk estimation, and online H/L/R allocation with schedule validation. Appendix~\ref{app:theory} derives the update decomposition, establishes exact-recovery conditions, and analyzes isolated and joint gradient perturbations. Appendix~\ref{app:additional-experiments} presents experimental protocols, additional baselines, memory--throughput comparisons, generalization results, and sensitivity and runtime analyses.

\clearpage

\section{Recovery-Unit Construction and Profiling}
\label{app:recovery-units}

\subsection{Graph Attribution and Recomputation Verification}
\label{app:graph-attribution}

A recovery unit couples a forward subgraph with the complete state bundle required by its backward operators. \method traces graph lineage, module ownership, checkpoint boundaries, storage aliases, and backward dependencies, accounting for fused operations and shared checkpoint inputs. Let $\mathcal K(\mathbf a^R)$ and $\mathcal K(\mathbf a^{R\rightarrow H_u})$ denote recomputation-kernel multisets under all-\Ract recovery and with only unit $u$ materialized in high precision. Its removed footprint is $\mathcal R_u=\mathcal K(\mathbf a^R)\setminus\mathcal K(\mathbf a^{R\rightarrow H_u})$, with multiplicity preserved. A candidate is admitted only if $\mathcal R_u\neq\varnothing$ and the removed executions belong to its attributed forward subgraph; executions are matched by operator type, graph lineage, tensor shape, and profiler call stack. The attributed backward graph defines $\mathcal J(u)$ as response positions whose parameter-gradient contributions depend on unit $u$'s saved state, including indirect dependencies through later layers. The coefficient $\alpha_{t,u}$ supplies an empirical token weight within this support.

Each verified unit exposes feasible actions $\mathcal A_u\subseteq\{\Hact,\Lact,\Ract\}$: \Hact retains the complete bundle in training precision, \Lact reconstructs it from a supported low-precision component, and \Ract recomputes it from exact checkpoint inputs. Each action supplies the state required by the unit's backward operators.

\subsection{Measured Recovery Utility}
\label{app:recovery-utility}

Conditional utility compares recovery actions under the same candidate-storage plan $\mathbf q$:
\begin{equation}
\begin{aligned}
\Delta\tau_u(a\mid\mathbf q)
&=T_{\mathrm{upd}}(\mathbf q,\mathbf a^R)-T_{\mathrm{upd}}(\mathbf q,\mathbf a^{R\rightarrow a_u}),\\
\Delta m_u(a\mid\mathbf q)
&=\left[M_{\mathrm{peak}}(\mathbf q,\mathbf a^{R\rightarrow a_u})-M_{\mathrm{peak}}(\mathbf q,\mathbf a^R)\right]_+.
\end{aligned}
\label{eq:app-utility}
\end{equation}
Let $\mathbf q^R$ retain standard GC checkpoints. Candidate-capture overhead is
\begin{equation}
C(\mathbf q)=T_{\mathrm{upd}}(\mathbf q,\mathbf a^R)-T_{\mathrm{upd}}(\mathbf q^R,\mathbf a^R).
\label{eq:candidate-overhead}
\end{equation}
Net savings over GC therefore satisfy
\begin{equation}
\begin{aligned}
T_{\mathrm{upd}}(\mathbf q^R,\mathbf a^R)-T_{\mathrm{upd}}(\mathbf q,\mathbf a)=T_{\mathrm{upd}}(\mathbf q,\mathbf a^R)-T_{\mathrm{upd}}(\mathbf q,\mathbf a)-C(\mathbf q).
\end{aligned}
\label{eq:net-recovery-saving}
\end{equation}
Complete actor-update timing includes capture costs even for candidates subsequently discarded. We set $\Delta\tau_u(\Ract\mid\mathbf q)=\Delta m_u(\Ract\mid\mathbf q)=0$ and exclude non-\Ract actions with non-positive conditional utility. The positive part in Eq.~\ref{eq:app-utility} records additional peak-memory demand relative to same-plan all-\Ract execution. Profiling uses repeated, warmed-up paired updates with matched parameters, inputs, microbatch assignments, precision, RNG states, and execution settings. Peak-memory measurements start from matched fresh allocator states.

\subsection{Non-Additive Recovery Interactions}
\label{app:recovery-interactions}

Recovery utilities can interact through recomputation, fusion, and memory lifetimes. Overlapping footprints, $\mathcal R_u\cap\mathcal R_v\neq\varnothing$, identify dependent units. Bundles jointly required by an executable recovery dependency are merged during graph attribution; remaining interactions are assessed during schedule validation. For a selected non-\Ract action, its leave-one-out utility under schedule $\mathbf a$ is
\begin{equation}
\Delta\tau_u^{\mathrm{loo}}(\mathbf a\mid\mathbf q)=T_{\mathrm{upd}}(\mathbf q,\mathbf a_{u\rightarrow R})-T_{\mathrm{upd}}(\mathbf q,\mathbf a),
\label{eq:loo}
\end{equation}
where $\mathbf a_{u\rightarrow R}$ changes unit $u$ to \Ract while preserving capture under $\mathbf q$. During validation, actions with non-positive leave-one-out utility are removed and the schedule re-evaluated. This measures runtime contribution under joint execution, including overlapping recomputation and allocator effects.

\section{Calibration and Scheduling Details}
\label{app:calibration-scheduling}

\subsection{Exposure and Low-Precision Calibration}
\label{app:exposure-calibration}

\textbf{Exposure for the evaluated MLP units.} Let $\mathcal U_{L,b}$ denote the L-eligible units in microbatch $b$, and let $\mathcal T_b$ contain its valid generated response-token positions, excluding prompt and padding positions. Under response-validity masking, $m_t=1$ for every $t\in\mathcal T_b$. Each selected tensor spans the complete microbatch and has response-token support $\mathcal J(u)=\mathcal T_b$. Let $X_{u,b,t,:}\in\mathbb R^{d_u}$ denote its feature vector at the source position used to predict response token $t$. We compute tokenwise activation magnitudes from the original forward tensor:
\begin{equation}
a_{t,u,b}=\left(\frac{1}{d_u}\sum_{j=1}^{d_u}X_{u,b,t,j}^{2}\right)^{1/2},\qquad \alpha_{t,u,b}=\frac{a_{t,u,b}+\varepsilon_\alpha}{|\mathcal T_b|^{-1}\sum_{s\in\mathcal T_b}a_{s,u,b}+\varepsilon_\alpha}.
\label{eq:activation-token-weights}
\end{equation}
We use FP32 reductions and $\varepsilon_\alpha=10^{-8}$. The weights have mean one within each unit and microbatch, preserving relative token magnitudes while removing the unit's overall activation scale. They are detached from autograd and retained until the current update coefficients become available. Exposure is
\begin{equation}
\widehat{\mathcal E}_{u,b}=\sum_{t\in\mathcal T_b}|\omega_t|\alpha_{t,u,b}.
\label{eq:mlp-weighted-exposure}
\end{equation}
Masked positions have $\omega_t=0$. The weights provide an empirical activation-based proxy; calibration relates the resulting exposure and reconstruction distortion to measured single-unit gradient error. The ideal exposure in Eq.~\ref{eq:update-exposure} retains the state-to-gradient Jacobians.

\textbf{Low-precision recovery.}
The saved-tensor policy selects complete BF16 MLP tensors $X_u$ of at least $32$ MiB, with at most one selected tensor per layer. Each selected tensor spans the microbatch's token positions, and its complete token dimension participates in a single recovery action. The low-precision codec applies
\begin{equation}
c_L(X_u)=\operatorname{cast}_{\mathrm{E4M3FN}}(X_u),
\qquad
\widehat X_{L,u}=\operatorname{cast}_{\mathrm{BF16}}(c_L(X_u)).
\label{eq:fp8-roundtrip}
\end{equation}
The codec uses \texttt{torch.float8\_e4m3fn} without scales, zero-points, or outlier tensors. Forward uses the original BF16 tensor; backward uses its reconstruction, with other bundle components exact.

Calibration covers $\mathcal U_L=\{u\in\mathcal U:\Lact\in\mathcal A_u\}$. For each $u\in\mathcal U_L$ and microbatch $b\in\mathcal C$, paired passes match parameters, inputs, RNG states, microbatch assignments, and backward settings; only $u$ uses \Lact. Let $z_{u,b}$ and $\widehat z_{u,b}$ be the vectorized bundles before and after replacing $X_{u,b}$ with its reconstruction, and $g_{\mathrm{ex},b},g_{L,u,b}$ the corresponding trainable-parameter gradients. We measure
\begin{equation}
e^L_{u,b}=\frac{\|\widehat z_{u,b}-z_{u,b}\|_2}{\|z_{u,b}\|_2+\varepsilon},
\qquad
y_{u,b}=\sqrt{\frac{\|g_{L,u,b}-g_{\mathrm{ex},b}\|_2^2}{\max\{\|g_{\mathrm{ex},b}\|_2^2,\varepsilon_{\mathrm{num}}\}}}.
\label{eq:app-calibration-error}
\end{equation}
Calibration converts operands to FP32 before subtraction and norm computation, using $\varepsilon=10^{-8}$ and $\varepsilon_{\mathrm{num}}=10^{-30}$, distinct from clipping radius $\epsilon$. Diagnostics follow Appendix~\ref{app:setup-gradient-fidelity}.

Group eligible units by saved-state type and layer $c(u)$. With $x_{u,b}=\widehat{\mathcal E}_{u,b}e^L_{u,b}$, fit
\begin{equation}
\kappa_c=\operatorname*{arg\,min}_{\kappa\ge0}\sum\nolimits_{(u,b):\,c(u)=c}(y_{u,b}-\kappa x_{u,b})^2,
\label{eq:app-kappa}
\end{equation}
and define
\begin{equation}
\bar e^L_u=\frac{1}{|\mathcal C|}\sum\nolimits_{b\in\mathcal C}e^L_{u,b},
\qquad
\psi_u=\kappa_{c(u)}\widehat{\mathcal E}_u\bar e^L_u.
\label{eq:app-risk-score}
\end{equation}
Initialization fixes $\kappa_c$ and $\bar e_u^L$ within each unit/shape class. Each exposure refresh combines current token coefficients with the corresponding activation weights to update $\widehat{\mathcal E}_{u,b}$ and $\psi_{u,b}$ before recovery selection, as defined in Eq.~\ref{eq:token-weighted-risk}.

\subsection{Joint Allocation and Schedule Validation}
\label{app:allocation-validation}

Approximation costs are
\begin{equation}
\widehat d_u(\Hact)=\widehat d_u(\Ract)=0,\qquad
\widehat d_u(\Lact)=\psi_u,\qquad
\widehat D(\mathbf a)=\sum\nolimits_u\widehat d_u(a_u).
\label{eq:app-distortion}
\end{equation}
For microbatch $b$ and candidate plan $\mathbf q$, feasible assignments satisfy $a_u\in\mathcal A_u(q_u)$ for all $u$, $\sum_u\Delta m_u(a_u\mid\mathbf q)\le B-M_{\mathrm{peak}}(\mathbf q,\mathbf a^R)$, and $\widehat D(\mathbf a)\le\delta_b$. The allocator selects
\begin{equation}
\mathbf a^\star=\arg\max_{\mathbf a}\sum\nolimits_u\Delta\tau_u(a_u\mid\mathbf q).
\label{eq:app-joint-allocation}
\end{equation}
Unit-level quantities suppress the microbatch index.

\textbf{Conservative discretization.}
For $M_{\mathrm{res}}=B-M_{\mathrm{peak}}(\mathbf q,\mathbf a^R)>0$ and $\delta_b>0$, use $N_B=N_\delta=1024$ intervals:
\begin{equation}
h_B=\frac{M_{\mathrm{res}}}{N_B},\qquad h_\delta=\frac{\delta_b}{N_\delta},\qquad K_B=N_B+1,\qquad K_\delta=N_\delta+1.
\label{eq:allocation-grid}
\end{equation}
Including zero on each axis, round action costs upward:
\begin{equation}
b_u(a)=\left\lceil \Delta m_u(a\mid\mathbf q)/h_B \right\rceil,
\qquad
r_u(a)=\left\lceil \widehat d_u(a)/h_\delta \right\rceil.
\label{eq:allocation-integer-costs}
\end{equation}
Utilities remain unquantized. Each cost is bounded by its integer cost times the grid step, giving
\begin{equation}
\sum\nolimits_u b_u(a_u)\le N_B \Longrightarrow \sum\nolimits_u\Delta m_u(a_u\mid\mathbf q)\le M_{\mathrm{res}},\quad
\sum\nolimits_u r_u(a_u)\le N_\delta \Longrightarrow \widehat D(\mathbf a)\le\delta_b.
\label{eq:allocation-grid-feasibility}
\end{equation}
Discretization thus preserves additive-surrogate feasibility.

\textbf{Dynamic programming.}
Let $F_i(j,k)$ be the maximum utility over the first $i$ units with integer costs exactly $(j,k)$. Initialize $F_0(0,0)=0$ and other entries to $-\infty$. For $0\le j\le N_B$ and $0\le k\le N_\delta$,
\begin{equation}
F_i(j,k)=\max\nolimits_{a\in\mathcal A_i(q_i):\,b_i(a)\le j,\,r_i(a)\le k}
\left\{F_{i-1}(j-b_i(a),k-r_i(a))+\Delta\tau_i(a\mid\mathbf q)\right\}.
\label{eq:allocation-dp}
\end{equation}
Decomposing each assignment into a prefix and its final action proves the recurrence by induction. Maximizing over terminal states and backtracking solves the discretized surrogate. Terminal ties favor lower risk, then memory; within-state ties follow $\Ract,\Hact,\Lact$ in fixed unit order.

\textbf{Boundary cases.}
Zero residual memory permits only zero-memory-cost actions and one memory state. Zero risk disables \Lact and leaves one risk state. Plans exceeding $B$ under all-\Ract execution are excluded before capture; admitted plans retain the zero-cost all-\Ract assignment.

\textbf{Schedule validation.}
Calibration measures memory, runtime, and gradient deviation. Non-\Ract actions with non-positive leave-one-out utility are removed; schedules re-evaluated. Selection requires
\begin{equation}
\max\nolimits_{v\in\mathcal D_{\mathrm{dev}}} M_{\mathrm{peak}}(\mathbf q_v,\mathbf a_v;v)\le B,
\qquad
\frac{1}{|\mathcal D_{\mathrm{dev}}|}\sum\nolimits_{v\in\mathcal D_{\mathrm{dev}}}e_{\mathrm{grad},v}\le\epsilon_g.
\label{eq:app-measured-constraints}
\end{equation}
Each $v$ is a complete update with microbatch plans $\mathbf q_v,\mathbf a_v$. Error is measured after accumulation and distributed reduction, before clipping; $\epsilon_g=0.015$ constrains its development-set mean. Selection enforces Eq.~\ref{eq:configuration-selection}'s terminal validation criterion. Configurations remain fixed for held-out evaluation.

\subsection{Online Scheduling and Complexity}
\label{app:online-scheduling}

\textbf{Candidate-storage plans.}
Development evaluation selects the fastest feasible plan per supported shape class and memory budget using complete actor-update timing. Reservations cover retained candidates, exact checkpoints, encoding, and backward workspaces. Each microbatch selects its validated plan before forward; unsupported shapes use all-\Ract capture.

\textbf{Available recovery actions.}
Retained representations restrict verified actions:
\begin{equation}
\mathcal A_u(q_u)=\mathcal A_u\cap
\begin{cases}
\{\Hact,\Lact,\Ract\}, & q_u=\Hact,\\
\{\Lact,\Ract\}, & q_u=\Lact,\\
\{\Ract\}, & q_u=\Ract.
\end{cases}
\label{eq:available-recovery-actions}
\end{equation}
Post-forward encoding enters recovery cost. Compressed candidates assigned to \Ract are discarded and recomputed from exact checkpoints with execution states. Forward uses original activations.

\textbf{Current-update selection.}
Microbatch losses retain the complete update's normalization. During forward, \method collects the activation weights in Eq.~\ref{eq:activation-token-weights}. After forward, it computes the current token coefficients and forms each unit's exposure and risk score using Eq.~\ref{eq:token-weighted-risk}. Recovery allocation combines these tensor-level scores with the available actions, measured utilities, and memory costs. For optimized response-token counts $N_b$ with positive total, the microbatch risk budget is $\delta_b=\delta N_b/\sum_{b'}N_{b'}$. Allocation uses the plan's residual memory and $\delta_b$. These budgets constrain surrogate risk; Eq.~\ref{eq:measured-gradient-error} measures fidelity on the accumulated gradient.

\textbf{Implementation and complexity.}
Compiled single-threaded CPU code uses FP64 utility tables and risk costs. Actor-update timing includes input preparation, device-to-host transfers, synchronization, DP, and backtracking, alongside capture, exposure collection, and recovery.

Let $U$ count units, $U_L=|\mathcal U_L|$, $C$ calibration batches, $P$ profiling repetitions, $Q$ plans, and $A_{\max}\le3$ actions per unit. Profiling requires $O(QPUA_{\max})$ matched measurements; isolated calibration requires $O(CU_L)$ evaluations, plus complete executions for joint plan validation.

Allocation takes $O(UA_{\max}K_BK_\delta)=O(UK_BK_\delta)$ time. Two rolling value tables use $O(K_BK_\delta)$ storage; predecessor actions use $O(UK_BK_\delta)$ for backtracking. Collecting activation-based token weights costs $O(V)$ for $V$ reduced activation elements and retains $O(I)$ weight values. Given these weights, coefficient and score refresh takes $O(|\mathcal T|+I+U)$ per microbatch, with $O(U)$ additional unit-level metadata. Peak-memory accounting includes retained weights and reduction workspaces. Update timing includes collection, refresh, transfers, and allocation.

\section{Theoretical Analysis}
\label{app:theory}

\subsection{Update Semantics and Exact Recovery}
\label{app:update-semantics}

\begin{proof}[Proof of Proposition~\ref{prop:exact-decomposition}]
Fix rollout advantages, update masks, old-policy log-probabilities, and reference-policy quantities as stop-gradient values, with $N=\sum_t m_t>0$. Define $s_t=\log\pi_\theta(o_t\mid q,o_{<t})$ and $r_t=\exp(s_t-s_t^{\mathrm{old}})$. The token's policy loss is
\begin{equation}
\ell_t(s_t)=-\frac{m_t}{N}\min\!\left(r_t\widehat A_t,\operatorname{clip}(r_t,1-\epsilon,1+\epsilon)\widehat A_t\right).
\end{equation}
Since $\partial r_t/\partial s_t=r_t$, away from clipping boundaries,
\begin{equation}
\frac{\partial\ell_t}{\partial s_t}=-\frac{m_t}{N}\chi_t r_t\widehat A_t,
\qquad
\chi_t=\mathbf 1\!\left[(\widehat A_t\ge0\wedge r_t\le1+\epsilon)\vee(\widehat A_t<0\wedge r_t\ge1-\epsilon)\right].
\end{equation}
At boundaries, learner implementation determines the derivative. The token-separable KL term gives
\begin{equation}
\omega_t=\frac{\partial\mathcal L}{\partial s_t}=-\frac{m_t}{N}\chi_t r_t\widehat A_t+\beta\frac{\partial\mathcal L_{\mathrm{KL}}}{\partial s_t}.
\end{equation}
For the sampled-token K3 regularizer in experiments, with fixed reference log-probability $s_t^{\mathrm{ref}}$,
\begin{equation}
\mathcal L_{\mathrm{KL}}=\frac{1}{N}\sum_t m_t\left[\exp(s_t^{\mathrm{ref}}-s_t)-(s_t^{\mathrm{ref}}-s_t)-1\right],
\qquad
\frac{\partial\mathcal L_{\mathrm{KL}}}{\partial s_t}=\frac{m_t}{N}\left[1-\exp(s_t^{\mathrm{ref}}-s_t)\right].
\end{equation}
Because the objective depends on $\theta$ through $\{s_t\}$, the chain rule yields
\begin{equation}
\nabla_\theta\mathcal L
=\sum\nolimits_t\frac{\partial\mathcal L}{\partial s_t}\nabla_\theta s_t
=\sum\nolimits_t\omega_t g_t,
\qquad
g_t=\nabla_\theta s_t.
\end{equation}
Coefficients come from rollout data and the exact forward pass in linear time. Recovery modifies saved states, preserving forward log-probabilities and loss-side quantities, so $\{\omega_t\}$ remains fixed.
\end{proof}

Checkpointing reconstructs saved states by replaying forward computation~\citep{chen2016training}. Exact H/R equivalence follows under matched execution.

\begin{proposition}[Exact recovery equivalence]
\label{prop:exact-recovery}
Fix the forward pass and backward graph. Assume \Hact preserves complete saved states and \Ract reproduces them with identical parameters, inputs, positional/RNG states, and deterministic operator and precision settings. Under identical deterministic backward execution and accumulation/reduction orders, every H/R-only schedule $\mathbf a$ satisfies
\begin{equation}
\mathcal L^{\mathbf a}=\mathcal L^{\mathbf a^R},
\qquad
g^{\mathbf a}=g^{\mathbf a^R},
\end{equation}
where $g^{\mathbf a}$ denotes the computed parameter gradient.
\end{proposition}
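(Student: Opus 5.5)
The approach is to reduce both equalities to a single claim: every saved-state bundle consumed by backward is bitwise identical under $\mathbf a$ and under $\mathbf a^R$. The loss equality is immediate. The forward pass is fixed by hypothesis and recovery actions act only after forward, so $\{s_t\}$, the loss-side quantities, and hence $\mathcal L^{\mathbf a}=\mathcal L^{\mathbf a^R}$ coincide. This also shows that the coefficients $\omega_t$ of Proposition~\ref{prop:exact-decomposition} are the same under both schedules, so the upstream gradients seeding backward at each $s_t$ agree.

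\textbf{Step 1 (unit-wise equality of bundles).} For each unit $u$ I compare the bundle $\tilde z_u^{\mathbf a}$ handed to its backward operators with $\tilde z_u^{\mathbf a^R}$.
\begin{itemize}[nosep]
\item If $a_u=\Ract$, both schedules recompute $u$, so the two bundles are produced by the same replay.
\item If $a_u=\Hact$, then $\tilde z_u^{\mathbf a}=z_u$, the bundle stored at forward time, by the completeness assumption on \Hact.
\end{itemize}
In the $\Hact$ case it remains to show $\tilde z_u^{\mathbf a^R}=z_u$. The replay under $\mathbf a^R$ runs the forward subgraph of $u$ on exact checkpoint inputs. These inputs are retained in training precision and are never approximated, since the schedule is H/R-only. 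The replay also uses identical parameters, positional/RNG states, and deterministic operator and precision settings. A deterministic computation applied to identical inputs and states returns identical outputs, so the replay reproduces $z_u$ bitwise. The main care here is that checkpoint inputs are themselves exact, and that RNG state (e.g.\ dropout) is restored to its forward-time value. Both are guaranteed by the hypotheses, and no \Lact reconstruction feeds any replay.

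\textbf{Step 2 (induction over backward).} Order the backward operators in the fixed deterministic execution order, which is common to both schedules. By induction, the incoming gradient and the saved states at each operator are identical across the two schedules. The base case is the seed gradients, which are equal because the loss and coefficients agree. For the inductive step, the saved states agree by Step 1 and the incoming gradients agree by the induction hypothesis. Each operator is deterministic, so its outputs, namely activation gradients and parameter-gradient contributions, are identical.

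\textbf{Step 3 (accumulation).} The parameter-gradient contributions are summed across microbatches and across devices in identical accumulation and reduction orders. Floating-point addition is deterministic for a fixed order, so $g^{\mathbf a}=g^{\mathbf a^R}$ bitwise. I expect the main obstacle to be making Step 2 rigorous about ordering. Recomputation kernels interleave with the backward kernels differently under $\mathbf a$ and $\mathbf a^R$. I would argue that this interleaving changes scheduling but does not change the dataflow inputs to any backward operator or the order of any reduction, since both are fixed by the assumption of identical deterministic backward execution and accumulation orders.
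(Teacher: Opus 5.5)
Your proof is correct and follows the same route as the paper's proof: forward execution and saved states coincide, a reverse-topological induction over the fixed backward graph starting from equal loss derivatives gives equal backward outputs, and matched accumulation/reduction orders give equal gradients; your Step~1 spells out what the paper compresses into ``states coincide.'' One small tightening: in the \Ract case, argue as you do for \Hact that each replay returns the forward-time bundle $z_u$ (or invoke the hypothesis that \Ract reproduces saved states), rather than saying the two schedules run ``the same replay,'' since materializing other units under $\mathbf a$ can change which recomputation kernels actually execute.
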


\begin{proof}
Forward execution and states coincide. From equal loss derivatives, reverse-topological induction gives equal backward outputs; matched accumulation/reduction preserve gradients.
\end{proof}

This equivalence motivates zero H/R approximation costs.

\subsection{Policy-Update Exposure and Calibrated Risk}
\label{app:exposure-risk-theory}

Fix the exact forward pass, its loss-side quantities, and $\{\omega_t\}$. The following perturbation analysis considers the backward maps in real arithmetic with a fixed execution graph. For unit $u$, let $z_u$ be its exact saved state and $\widehat z_u=z_u+\Delta z_u$ its reconstruction. Define $g_t(z)$ as the parameter-gradient vector returned by seeding a unit adjoint at $s_t$ when unit $u$ consumes $z$ and all other saved states remain exact. Thus $g_t(z_u)=\nabla_\theta s_t$. For fixed saved states, backward propagation is linear in these seed adjoints. Tokens outside $\mathcal J(u)$ are unaffected, giving the exact identity
\begin{equation}
\Delta g_u=\sum\nolimits_{t\in\mathcal J(u)}\omega_t\left[g_t(z_u+\Delta z_u)-g_t(z_u)\right].
\label{eq:exact-gradient-perturbation}
\end{equation}

\begin{theorem}
\label{thm:exposure-bound}
For each $t\in\mathcal J(u)$, assume $g_t$ is continuously differentiable on a neighborhood containing the segment $[z_u,\widehat z_u]$, with an $H_{t,u}$-Lipschitz Jacobian in operator norm. Define
\begin{equation}
G_{t,u}=\left.\frac{\partial g_t(z)}{\partial z}\right|_{z=z_u}.
\label{eq:state-gradient-jacobian}
\end{equation}
Then
\begin{equation}
\Delta g_u=\sum\nolimits_{t\in\mathcal J(u)}\omega_tG_{t,u}\Delta z_u+r_u,
\label{eq:first-order-gradient-perturbation}
\end{equation}
where
\begin{equation}
\|r_u\|_2\le\frac12\mathcal H_u\|\Delta z_u\|_2^2,
\qquad
\mathcal H_u=\sum\nolimits_{t\in\mathcal J(u)}|\omega_t|H_{t,u}.
\label{eq:second-order-remainder}
\end{equation}
Consequently,
\begin{equation}
\|\Delta g_u\|_2\le\mathcal E_u\|\Delta z_u\|_2+\frac12\mathcal H_u\|\Delta z_u\|_2^2,
\label{eq:exposure-bound}
\end{equation}
with
\begin{equation}
\mathcal E_u=\sum\nolimits_{t\in\mathcal J(u)}|\omega_t|\|G_{t,u}\|_{\mathrm{op}}.
\label{eq:ideal-exposure}
\end{equation}
\end{theorem}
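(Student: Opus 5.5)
The argument starts from the exact identity \eqref{eq:exact-gradient-perturbation}, which holds because backward propagation is linear in the seed adjoints once the saved states are fixed. Tokens outside $\mathcal J(u)$ contribute nothing to $\Delta g_u$. Everything then reduces to controlling each increment $g_t(z_u+\Delta z_u)-g_t(z_u)$ separately and summing with weights $\omega_t$.

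For a fixed $t\in\mathcal J(u)$, apply the integral form of the mean value theorem along the segment $z(\lambda)=z_u+\lambda\Delta z_u$, $\lambda\in[0,1]$. This is legitimate because $g_t$ is $C^1$ on a neighborhood containing the segment. It gives
\[
g_t(\widehat z_u)-g_t(z_u)=\int_0^1 \frac{\partial g_t}{\partial z}\Big|_{z(\lambda)}\Delta z_u\,d\lambda
= G_{t,u}\Delta z_u+\rho_t,
\]
where the remainder is
\[
\rho_t=\int_0^1\Big(\frac{\partial g_t}{\partial z}\Big|_{z(\lambda)}-G_{t,u}\Big)\Delta z_u\,d\lambda .
\]
By the $H_{t,u}$-Lipschitz property of the Jacobian in operator norm, the integrand has norm at most $H_{t,u}\lambda\|\Delta z_u\|_2^2$. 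Integrating $\lambda$ over $[0,1]$ then yields $\|\rho_t\|_2\le\tfrac12H_{t,u}\|\Delta z_u\|_2^2$.

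Substituting this into \eqref{eq:exact-gradient-perturbation} gives \eqref{eq:first-order-gradient-perturbation} with $r_u=\sum_{t\in\mathcal J(u)}\omega_t\rho_t$. The triangle inequality then bounds $\|r_u\|_2$ by $\sum_t|\omega_t|\tfrac12H_{t,u}\|\Delta z_u\|_2^2=\tfrac12\mathcal H_u\|\Delta z_u\|_2^2$. For \eqref{eq:exposure-bound}, apply the triangle inequality once more to the first-order term, using $\|G_{t,u}\Delta z_u\|_2\le\|G_{t,u}\|_{\mathrm{op}}\|\Delta z_u\|_2$; this produces exactly $\mathcal E_u\|\Delta z_u\|_2$, and adding the remainder bound finishes the proof.

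There is no serious obstacle. The only point needing care is the justification of \eqref{eq:exact-gradient-perturbation}. The coefficients $\omega_t$ are held fixed because the forward pass and loss-side quantities are exact; only backward consumes $\widehat z_u$. So the total gradient is $\sum_t\omega_t g_t(\cdot)$ both before and after the perturbation, by linearity in the seed. I would state this explicitly, noting that the analysis is in real arithmetic on a fixed graph as stipulated. I would also note that the operator norm is the one induced by the $\ell_2$ norms on state and parameter space, so that the Lipschitz bound is consistent with this choice.
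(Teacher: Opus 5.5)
Your proposal is correct and follows the paper's proof essentially step for step: the integral Taylor formula along the segment $[z_u,\widehat z_u]$, the Lipschitz estimate $\int_0^1\lambda H_{t,u}\|\Delta z_u\|_2^2\,d\lambda=\tfrac12 H_{t,u}\|\Delta z_u\|_2^2$ for each token remainder, substitution into the exact linear-in-seed identity with $r_u=\sum_t\omega_t r_{t,u}$, and then the triangle inequality combined with $\|G_{t,u}\Delta z_u\|_2\le\|G_{t,u}\|_{\mathrm{op}}\|\Delta z_u\|_2$. Your extra remarks, on why the $\omega_t$ stay fixed and on using the operator norm induced by $\ell_2$, agree with the paper's standing assumptions (real arithmetic, fixed execution graph, exact forward pass).
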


\begin{proof}
The integral Taylor formula gives
\begin{equation}
\begin{aligned}
g_t(z_u+\Delta z_u)&=g_t(z_u)+G_{t,u}\Delta z_u+r_{t,u},\\
r_{t,u}&=\int_0^1\left[Dg_t(z_u+\lambda\Delta z_u)-Dg_t(z_u)\right]\Delta z_u\,d\lambda.
\end{aligned}
\label{eq:token-taylor}
\end{equation}
Lipschitz continuity implies
\begin{equation}
\|r_{t,u}\|_2\le\int_0^1\lambda H_{t,u}\|\Delta z_u\|_2^2\,d\lambda
=\frac12H_{t,u}\|\Delta z_u\|_2^2.
\label{eq:token-remainder}
\end{equation}
Substitution into Eq.~\ref{eq:exact-gradient-perturbation} gives Eq.~\ref{eq:first-order-gradient-perturbation} with $r_u=\sum_t\omega_t r_{t,u}$. The triangle inequality bounds its norm by $\mathcal H_u\|\Delta z_u\|_2^2/2$. Applying $\|G_{t,u}\Delta z_u\|_2\le\|G_{t,u}\|_{\mathrm{op}}\|\Delta z_u\|_2$ yields Eq.~\ref{eq:exposure-bound}.
\end{proof}

The bound separates reconstruction magnitude from current-update sensitivity. For fixed perturbation norm, $\mathcal E_u$ controls the first-order bound; realized deviation depends on direction and cancellation.

\begin{corollary}[Zero-coefficient support]
\label{cor:zero-exposure}
If $\omega_t=0$ for every $t\in\mathcal J(u)$, then any state perturbation for which the fixed backward map remains well-defined satisfies
\begin{equation}
\Delta g_u=0.
\label{eq:zero-exposure-exact}
\end{equation}
Under the theorem's smoothness assumptions, $\mathcal E_u=\mathcal H_u=0$.
\end{corollary}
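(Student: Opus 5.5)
The first claim follows directly from the exact perturbation identity, Eq.~\ref{eq:exact-gradient-perturbation}. That identity needs no smoothness assumption. It rests only on two facts. First, for fixed saved states, backward propagation is linear in the seed adjoints at the $s_t$, so the computed gradient equals $\sum_t \omega_t g_t(\cdot)$. Second, tokens outside $\mathcal J(u)$ do not depend on $z_u$, so their contributions cancel exactly. Both facts require only that the fixed backward map be well-defined at $\widehat z_u=z_u+\Delta z_u$, which is exactly the hypothesis of the corollary.

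Under that hypothesis, every summand in Eq.~\ref{eq:exact-gradient-perturbation} has the form $\omega_t\,[g_t(z_u+\Delta z_u)-g_t(z_u)]$. Each bracket is a finite vector, because both gradients are computed by well-defined backward executions. Each coefficient $\omega_t$ is zero for $t\in\mathcal J(u)$. Hence every term vanishes and $\Delta g_u=0$.

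The only subtlety here is to justify linearity in the seed adjoints without invoking differentiability in $z$. Reverse-mode accumulation is linear in the output adjoint for any fixed set of saved states. The loss-side coefficients $\omega_t$ are fixed by the exact forward pass, as in Proposition~\ref{prop:exact-decomposition}, so they do not change when $z_u$ is perturbed.

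For the second claim, assume the smoothness hypotheses of Theorem~\ref{thm:exposure-bound}. Then substitute $\omega_t=0$ into the definitions in Eq.~\ref{eq:ideal-exposure} and Eq.~\ref{eq:second-order-remainder}. Both $\mathcal E_u$ and $\mathcal H_u$ are sums of $|\omega_t|$ times finite nonnegative quantities, namely $\|G_{t,u}\|_{\mathrm{op}}$ and the Lipschitz constants $H_{t,u}$. Every term is therefore zero. As a consistency check, the bound Eq.~\ref{eq:exposure-bound} then gives $\|\Delta g_u\|_2\le 0$, which agrees with the first claim. I expect no step to be a real obstacle; the only care needed is the well-definedness and linearity point above.
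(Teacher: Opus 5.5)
Your proposal is correct and takes the same approach as the paper. Both arguments read $\Delta g_u=0$ directly off the exact identity Eq.~\ref{eq:exact-gradient-perturbation}, in which every term carries a zero coefficient and no Taylor expansion is needed, and both get $\mathcal E_u=\mathcal H_u=0$ by substituting $\omega_t=0$ into the definitions. Your remarks on linearity in the seed adjoints and on the fixed loss-side coefficients just spell out what the paper states before that identity.
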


\begin{proof}
Every term in Eq.~\ref{eq:exact-gradient-perturbation} has zero coefficient. This identity requires no Taylor approximation.
\end{proof}

When $\beta=0$, masked positions, zero advantages, and flat clipped policy-loss branches yield zero coefficients. The corollary applies when this holds throughout the unit support. With KL regularization, unmasked positions with zero policy-loss coefficients retain $\omega_t=\beta\,\partial\mathcal L_{\mathrm{KL}}/\partial s_t$; masked positions have zero coefficient under the masked K3 objective.

\textbf{From isolated perturbations to joint risk.}
For one microbatch, let $Z=(z_1,\ldots,z_U)$ collect unit-level recovery-state coordinates and $\mathcal G(Z')$ denote the gradient returned with states $Z'$, holding the forward coefficients fixed. Shared aliases must be represented consistently so that these coordinates describe jointly realizable interventions; unchanged shared checkpoints remain fixed. Equip the concatenated coordinates with the product Euclidean norm. For $\mathcal S=\{u:a_u=\Lact\}$, set $\Delta z_u=0$ outside $\mathcal S$ under Proposition~\ref{prop:exact-recovery}, and write $\Delta Z=(\Delta z_1,\ldots,\Delta z_U)$. Differentiation concerns reconstructed real-valued states, not the codec.

\begin{lemma}
\label{lem:joint-risk}
Assume $\mathcal G$ is continuously differentiable with an $H_{\mathrm{joint}}$-Lipschitz Jacobian on a convex neighborhood containing the joint and isolated perturbation segments. Then
\begin{equation}
\begin{gathered}
\Delta g_{\mathrm{joint}}:=\mathcal G(Z+\Delta Z)-\mathcal G(Z)
=\sum\nolimits_{u\in\mathcal S}v_u+r_{\mathrm{joint}},
\qquad
v_u:=\sum\nolimits_{t\in\mathcal J(u)}\omega_tG_{t,u}\Delta z_u,\\
\|r_{\mathrm{joint}}\|_2
\le\frac{H_{\mathrm{joint}}}{2}
\sum\nolimits_{u\in\mathcal S}\|\Delta z_u\|_2^2.
\end{gathered}
\label{eq:joint-gradient-expansion}
\end{equation}
Therefore,
\begin{equation}
\|\Delta g_{\mathrm{joint}}\|_2\le\sum\nolimits_{u\in\mathcal S}\mathcal E_u\|\Delta z_u\|_2+\frac{H_{\mathrm{joint}}}{2}\sum\nolimits_{u\in\mathcal S}\|\Delta z_u\|_2^2.
\label{eq:joint-exposure-bound}
\end{equation}
Let $g_{\mathrm{ex}}=\mathcal G(Z)$ and let $\Delta Z^{(u)}$ contain only block $\Delta z_u$. Define
\begin{equation}
c=\sqrt{\max\{\|g_{\mathrm{ex}}\|_2^2,\varepsilon_{\mathrm{num}}\}},
\qquad
y_u=\frac{\|\mathcal G(Z+\Delta Z^{(u)})-\mathcal G(Z)\|_2}{c}.
\label{eq:joint-isolated-risk}
\end{equation}
Then
\begin{equation}
\frac{\|\Delta g_{\mathrm{joint}}\|_2}{c}\le\sum\nolimits_{u\in\mathcal S}y_u+\frac{H_{\mathrm{joint}}}{c}\sum\nolimits_{u\in\mathcal S}\|\Delta z_u\|_2^2.
\label{eq:joint-relative-risk}
\end{equation}
\end{lemma}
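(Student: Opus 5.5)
The plan is to Taylor-expand the joint gradient map $\mathcal G$ at $Z$, once along the joint direction $\Delta Z$ and once along each isolated direction $\Delta Z^{(u)}$. The two expansions share the same first-order block terms $v_u$. Comparing them expresses the joint deviation through the isolated deviations, at the cost of a second quadratic remainder.

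\textbf{Step 1: identify the block Jacobians.} For fixed forward coefficients, backward propagation is linear in the seed adjoints, as noted before Eq.~\ref{eq:exact-gradient-perturbation}. Hence $\mathcal G(Z')=\sum_t\omega_t g_t(Z')$, where $g_t(Z')$ is the unit-seed gradient computed with saved states $Z'$. Restricting $Z'$ to vary only in block $u$ recovers the single-unit map $g_t(z)$ of Theorem~\ref{thm:exposure-bound}. Tokens outside $\mathcal J(u)$ do not depend on $z_u$. The partial Jacobian of $\mathcal G$ in block $u$ at $Z$ is therefore $\sum_{t\in\mathcal J(u)}\omega_tG_{t,u}$. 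It follows that $D\mathcal G(Z)\Delta Z=\sum_{u\in\mathcal S}v_u$, using $\Delta z_u=0$ outside $\mathcal S$ (Proposition~\ref{prop:exact-recovery}). I expect this to be the main obstacle. It needs two things: consistent representation of shared aliases, so that block coordinates are independent interventions and the block decomposition of the Fr\'echet derivative is legitimate; and the fact that the single-unit maps are exactly the restrictions of $\mathcal G$. I would state both explicitly as consequences of the fixed execution graph and of linearity in seeds.

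\textbf{Step 2: joint expansion.} Apply the integral Taylor formula exactly as in Eq.~\ref{eq:token-taylor}, now to $\mathcal G$ on the segment $[Z,Z+\Delta Z]$. The $H_{\mathrm{joint}}$-Lipschitz Jacobian gives $\|r_{\mathrm{joint}}\|_2\le\frac{H_{\mathrm{joint}}}{2}\|\Delta Z\|^2$. The product Euclidean norm gives $\|\Delta Z\|^2=\sum_{u\in\mathcal S}\|\Delta z_u\|_2^2$, which proves Eq.~\ref{eq:joint-gradient-expansion}. Eq.~\ref{eq:joint-exposure-bound} then follows from the triangle inequality together with $\|v_u\|_2\le\sum_t|\omega_t|\|G_{t,u}\|_{\mathrm{op}}\|\Delta z_u\|_2=\mathcal E_u\|\Delta z_u\|_2$.

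\textbf{Step 3: isolated expansions and comparison.} For each $u\in\mathcal S$, the same Taylor argument on the segment $[Z,Z+\Delta Z^{(u)}]$ gives
$\mathcal G(Z+\Delta Z^{(u)})-\mathcal G(Z)=v_u+r^{(u)}$, with $\|r^{(u)}\|_2\le\frac{H_{\mathrm{joint}}}{2}\|\Delta z_u\|_2^2$. This segment lies in the assumed convex neighborhood. Rearranging and using the definition of $y_u$ gives $\|v_u\|_2\le c\,y_u+\frac{H_{\mathrm{joint}}}{2}\|\Delta z_u\|_2^2$. Substituting this into the expansion of Step 2 and applying the triangle inequality yields $\|\Delta g_{\mathrm{joint}}\|_2\le c\sum_u y_u+H_{\mathrm{joint}}\sum_u\|\Delta z_u\|_2^2$. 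The constant $H_{\mathrm{joint}}$ arises as two halves, one from the joint remainder and one from the isolated remainders. Dividing by $c>0$, which holds since $\varepsilon_{\mathrm{num}}>0$, gives Eq.~\ref{eq:joint-relative-risk}.
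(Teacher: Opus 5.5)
Your proposal is correct and follows essentially the paper's argument: identify the block Jacobian $D_u\mathcal G(Z)=\sum_{t\in\mathcal J(u)}\omega_tG_{t,u}$, Taylor-expand once along the joint direction and once along each isolated direction with the same $H_{\mathrm{joint}}$ remainder bound, and combine the two with the triangle inequality. The only cosmetic difference is the order of steps. You first bound each $\|v_u\|_2\le c\,y_u+\tfrac{H_{\mathrm{joint}}}{2}\|\Delta z_u\|_2^2$ and then substitute. The paper instead writes the exact identity $\Delta g_{\mathrm{joint}}=\sum_{u\in\mathcal S}[\mathcal G(Z+\Delta Z^{(u)})-\mathcal G(Z)]+r_{\mathrm{joint}}-\sum_{u\in\mathcal S}\rho_u$ before taking norms. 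Both routes give the same constant $H_{\mathrm{joint}}$.
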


\begin{proof}
At the exact state, $D_u\mathcal G(Z)=\sum\nolimits_{t\in\mathcal J(u)}\omega_tG_{t,u}$. Taylor expansion and $\|\Delta Z\|_2^2=\sum\nolimits_u\|\Delta z_u\|_2^2$ give Eq.~\ref{eq:joint-gradient-expansion}; the triangle inequality yields Eq.~\ref{eq:joint-exposure-bound}.

Each isolated deviation has the form $\mathcal G(Z+\Delta Z^{(u)})-\mathcal G(Z)=v_u+\rho_u$, with $\|\rho_u\|_2\le H_{\mathrm{joint}}\|\Delta z_u\|_2^2/2$. Hence
\begin{equation}
\Delta g_{\mathrm{joint}}=\sum\nolimits_{u\in\mathcal S}\left[\mathcal G(Z+\Delta Z^{(u)})-\mathcal G(Z)\right]+r_{\mathrm{joint}}-\sum\nolimits_{u\in\mathcal S}\rho_u.
\end{equation}
The remainder has norm at most $H_{\mathrm{joint}}\sum\nolimits_u\|\Delta z_u\|_2^2$. Applying the triangle inequality and dividing by $c$ proves Eq.~\ref{eq:joint-relative-risk}.
\end{proof}

\textbf{Connection to calibrated risk.}
For an isolated intervention, let $s_u=\|z_u\|_2+\varepsilon$, so $\|\Delta z_u\|_2=s_u e_u^L$. Theorem~\ref{thm:exposure-bound} gives
\begin{equation}
y_u\le\frac{s_u}{c}\mathcal E_u e_u^L+\frac{\mathcal H_u s_u^2}{2c}(e_u^L)^2.
\label{eq:relative-exposure-bound}
\end{equation}
\method estimates isolated relative error by $\psi_u=\kappa_{c(u)}\widehat{\mathcal E}_u\bar e_u^L$, using exact $\omega_t$, structural support, and proxy $\alpha_{t,u}$; $\kappa_{c(u)}$ absorbs residual sensitivity and state/gradient norm scaling. Lemma~\ref{lem:joint-risk}'s first-order bound and second-order interactions motivate the calibrated surrogate $\widehat D(\mathbf a)=\sum_{u\in\mathcal S}\psi_u$. Eq.~\ref{eq:app-measured-constraints} checks fidelity after joint recovery and gradient accumulation.

\section{Additional Experimental Results}
\label{app:additional-experiments}

\subsection{Detailed Experimental Setup}
\label{app:experimental-setup}

\subsubsection{Training Configuration and Data}
\label{app:setup-training-data}

\textbf{Training configuration.}
Unless otherwise specified, LoRA uses rank $r=16$, scaling $\alpha/r$ with $\alpha=32$, and zero dropout; only adapters are trainable. Adapter targets are \texttt{q/k/v/o} and \texttt{gate/up/down} for Qwen/Llama, and \texttt{qkv/o} and \texttt{gate\_up/down} for Phi. We use PyTorch 2.5.1/CUDA 12.4 with bundled NCCL, Transformers 4.46.3, eager execution, FlashAttention-backed PyTorch SDPA, and method-specific kernels. Default prompt/response limits are 1024/2048 tokens; rollout temperature is 0.6, with top-$p$ 1 and no top-$k$ filtering. Microbatch targets count prompt/response tokens, including padding only in dense execution; over-target sequences form singleton microbatches within the length limits. Within each workload, recovery methods share training and batching settings (Table~\ref{tab:execution-configurations}), except for specified packing/length variations.

\begin{table}[t]
\centering
\caption{Shared learner execution settings; LoRA defaults apply unless specified.}
\label{tab:execution-configurations}
\footnotesize
\begin{tabularx}{\linewidth}{@{}p{0.12\linewidth}X@{}}
\toprule
Configuration & Setting \\
\midrule
Parallelism & Four-rank FSDP \texttt{FULL\_SHARD}; decoder-layer wrapping; TP=PP=SP=1; checkpoint regions inside wrappers \\
FSDP options & \texttt{use\_orig\_params=True} for LoRA, \texttt{False} for full-parameter training; no forward prefetch; \texttt{BACKWARD\_PRE}; all-gather limiting \\
Precision & FP32 persistent parameters, trainable-parameter AdamW moments, gradient reduction/accumulation, and floating-point buffers; BF16 parameter computation \\
Updates & Gradient reduction after each microbatch backward; gradients accumulate over the complete rollout batch before a single optimizer step \\
Checkpoint & Non-reentrant checkpointing with RNG preservation and early stopping; TF32 and strict deterministic mode disabled; separate model/method RNG streams \\
Residency & GPU actor; CPU-offloaded reference; rollout weights and KV caches released before actor updates \\
LoRA & 512 prompts $\times$ 8 responses/update; 4096-token microbatch target/GPU; no padding removal \\
DSR-sub & 128 prompts $\times$ 8 responses/update; 24000-token microbatch target/GPU; padding removal; 1024/3072 prompt/response limits \\
\bottomrule
\end{tabularx}
\vspace{-0.2in}
\end{table}

\textbf{GRPO and optimization.}
Binary rewards indicate correct math answers, complete K\&K identity assignments, or passing all training code tests (10-second timeout per test). Group advantages use population-standard-deviation normalization with a $10^{-6}$ denominator stabilizer; equal-reward groups have zero advantage. Main training uses one actor epoch per rollout batch, a clipping radius of 0.2, and a K3 KL coefficient of $10^{-3}$. Actor parameters remain fixed while gradients accumulate over all microbatches in the batch, followed by one optimizer step. Rollout sampling and old/current policy likelihoods use the same temperature-scaled distribution with $T=0.6$. The old policy is the parameter snapshot used to generate the batch. During each training update, $r_t=1$ in exact arithmetic throughout gradient accumulation, and policy-ratio clipping is inactive. Policy and KL losses are normalized by all valid response tokens in the complete rollout batch.
The update mask includes generated response tokens through the first termination token, when present, and excludes prompt and padding positions. Length-truncated responses retain their generated tokens. Zero-advantage responses remain included in the KL objective.
AdamW uses betas $(0.9,0.999)$, epsilon $10^{-8}$, zero weight decay, and global gradient-norm clipping at 1.0. Learning rates are selected by terminal validation performance after 60 GC updates on DeepMath10K with 2K responses and seed 2026, from $\{2\times10^{-6},5\times10^{-6},10^{-5}\}$ plus $10^{-6}$ for Llama-3.1-8B. Selected rates are $2\times10^{-6}$ for Qwen2.5-3B and $5\times10^{-6}$ for Phi-3.5-mini/Llama-3.1-8B, transferred across tasks and lengths. Full-parameter Llama-3.2-3B-Instruct on DSR-sub uses a prespecified $10^{-6}$. Rates remain constant and are shared across recovery methods within each workload; other optimizer/GRPO hyperparameters are shared across models. Search costs count as shared preparation.

\textbf{Data construction.}
Table~\ref{tab:data-splits} gives splits after deduplication, grouping of identified variants, and benchmark-overlap removal using source identifiers or normalized text. Within each workload, split seed 2026 fixes problem lists across methods, training seeds, models, and lengths. Training data supply on-policy rollouts and calibration that preserves model/optimizer states. Validation supports configuration selection; validation/test data are excluded from parameter updates and calibration.

\begin{table*}[t]
\centering
\caption{Training and validation splits.}
\label{tab:data-splits}
\footnotesize
\begin{tabularx}{0.9\linewidth}{@{}>{\raggedright\arraybackslash}p{0.15\linewidth}>{\raggedleft\arraybackslash}p{0.07\linewidth}>{\raggedleft\arraybackslash}p{0.09\linewidth}>{\raggedright\arraybackslash}X@{}}
\toprule
Workload & Train & Validation & Construction \\
\midrule
DeepMath10K\newline \citep{he2025deepmath} & 10,000 & 2,048 & Disjoint DeepMath-103K subsets; topic/difficulty stratification \\
TACO-Verified\newline \citep{li2023taco} & 11,874 & 1,024 & Source/difficulty stratification; remaining eligible problems form training set \\
K\&K\newline \citep{xie2025logicrl} & 4,000 & 1,000 & Published three- to seven-person training partitions; 800/200 train/validation split per 1,000-problem partition \\
DSR-sub & 1,209 & --- & Existing training subset; frozen recovery-hyperparameter transfer \\
\bottomrule
\end{tabularx}
\vspace{-0.2in}
\end{table*}

\subsubsection{Recovery-Method Implementations}
\label{app:setup-recovery-implementations}

\textbf{Baselines.}
Each method retains its supported tensor coverage; \method's eligibility restrictions apply only to \method. In LoRA runs, sensitivity estimates use trainable-adapter gradients, and low-rank backward methods preserve input-gradient propagation through frozen layers. Unsupported operators use ordinary autograd. Quantization metadata, projection state, and workspaces count toward memory; GACT, ALAM, and Adacc use FP32 quantization metadata. Compatibility mappings and shape-dependent handling are recorded with the implementations.
\begin{itemize}[leftmargin=1.3em,itemsep=2pt,topsep=3pt]
\item \textbf{GC / No-GC}~\citep{chen2016training}: decoder-layer checkpointing and ordinary autograd retention define All-R and All-H.
\item \textbf{SAC}: verified selective-checkpoint policies retain GEMM outputs, fused-attention outputs, or both; uncovered layers use ordinary GC.
\item \textbf{Rockmate}~\citep{zhao2023rockmate}: native rematerialization optimization uses profiled operation costs and decoder-layer or attention/MLP partitions respecting FSDP boundaries.
\item \textbf{GACT}~\citep{liu2022gact}: sensitivity-based precision allocation uses quantization groups of 256 elements and refreshes sensitivity every 1000 updates.
\item \textbf{ALAM}~\citep{woo2024alam}: native mixed-precision allocation uses Average Quantization and GradNormVar. The 0.5-bit option averages four adjacent values before two-bit quantization, independently of the quantization group size. GradNormVar compares layerwise gradient-norm vectors while changing only the target activation's quantization seed. Calibration occurs initially and at 10\% of the training horizon.
\item \textbf{CompAct}~\citep{shamshoum-etal-2025-compact}: Sketches $P_{ij}\sim\mathcal N(0,1/r)$ store $XP$ and reconstruct $XPP^\top$ for backward, retaining full-size AdamW states. Seeds refresh every 200 updates.
\item \textbf{PRAC}~\citep{li2026prac}: principal and random-complement projections retain layer-dependent ranks, shared-input reuse, and complement scaling $(d-r_1)/r_2$, where $d$ is feature dimension and $r_1,r_2$ are subspace ranks. Both subspaces refresh every 200 updates.
\item \textbf{INSTANT}~\citep{Doan2026INSTANT}: ordinary forward computation uses separately calibrated token-axis projections for saved activations and output gradients, preserving native low-rank weight- and input-gradient computations. Calibration uses five microbatches initially and every 50 updates.
\item \textbf{AGoQ}~\citep{lin2026agoq}: eligible non-attention activations use FP4 with block size 128; attention activations retain execution precision. Native recomputation and supported fused kernels remain; parameter-gradient storage and communication follow the shared FSDP configuration.
\item \textbf{Adacc}~\citep{chen2025adacc}: tensor-dependent INT4 compression retains original-precision outliers, native quantization layouts, and adaptive compression/recomputation scheduling. Monitoring intervals double from 1 to a maximum of 64 updates.
\end{itemize}

Rockmate optimizes blockwise schedules from profiled costs~\citep{zhao2023rockmate}; throughput also reflects execution and distributed-runtime interactions. FSDP-compatible partitions are evaluated with full actor-update timing, including recomputation, backward, communication, and optimizer work, yielding lower throughput than decoder-layer GC at the reported operating points.

\textbf{HiLoRe.}
Sixteen microbatches calibrate $\kappa_c$ and $\bar e_u^L$ using activation-weighted exposure. Tokenwise activation weights are collected during forward, and exposure and risk refresh after forward. L applies an unscaled BF16--FP8 E4M3FN--BF16 round trip to complete MLP tensors $\geq32$ MiB, with at most one tensor per layer and one recovery action shared across its token dimension. Mixed capture reserves $p_H$ for H and $1-p_H$ for L, ranking bundles by time saved per byte and breaking ties by execution order; All-R is available. Memory accounting includes checkpoints and encoding/backward workspaces, validated jointly. DP uses 1024 intervals (1025 states) per positive-budget axis, $h_B=M_{\mathrm{res}}/1024$, $h_\delta=\delta_b/1024$, upward-rounded costs, and single-threaded CPU FP64 values with backtracking; zero risk disables L. Sensitivity sweeps vary $\delta^\star$, calibration size, or refresh interval individually, using nested subsets from 32 microbatches.

\subsubsection{Configuration Selection}
\label{app:setup-configuration-selection}

\textbf{Search space.}
Primary tuning uses Qwen2.5-3B/DeepMath10K with 2K responses and $B=1.10B_{\mathrm{GC}}$. For each workload/protocol, $B_{\mathrm{GC}}$ is maximum GC memory across ranks and measured updates under selected configuration. Each tunable method receives at most 16 configurations, including native references and refinements; GC/No-GC are fixed in the fixed-packing comparisons. Table~\ref{tab:method-configurations} summarizes tuning spaces, including checkpointing on/off for compression-only baselines.

\begin{table*}[t]
\centering
\caption{Method-specific tuning spaces and configuration counts. CKPT denotes decoder-layer checkpointing on/off.}
\label{tab:method-configurations}
\scriptsize
\begin{tabularx}{\linewidth}{@{}p{0.1\linewidth}Xr@{}}
\toprule
Method & Search space & Configs \\
\midrule
GC / No-GC & Fixed All-R / All-H & 2 fixed \\
SAC & Retain GEMM, attention, or both; decoder-layer or attention/MLP regions; all or even layers & 12 \\
Rockmate & Decoder-layer or attention/MLP partitions; activation-memory allowance fractions $\{0.90,0.95,1.00\}$; DP memory units $\{1,4\}$ MiB & 12 \\
GACT & Average bits $\{2,3,4,8\}$; group size 256; CKPT & 8 \\
ALAM & Average bits $\{0.5,2,8\}$; group size 256; CKPT; one group-size-128 refinement per precision & 9 \\
CompAct & Rank ratio $\{1/4,1/2,7/8\}$; CKPT & 6 \\
PRAC & Multipliers $\{1/2,1,3/2\}$ on linear/nonlinear rank ratios $(0.6,0.4)$; principal fraction $\{1/2,3/4\}$; CKPT & 12 \\
INSTANT & Energy thresholds $\{0.95,0.99,0.999\}$; oversampling 7; CKPT; three oversampling-16 comparisons and two threshold refinements & 11 \\
AGoQ & Two activation policies $\times$ five outer checkpoint placements, defined below; block size 128 & 10 \\
Adacc & INT4; outlier Z-score $\{2.0,2.5,3.0,3.5\}$; activation-memory allowance fractions $\{0.90,0.95,1.00\}$; native layouts and adaptive monitoring & 12 \\
\method & All-R capture, or $p_H\in\{0,1/2,1\}$ paired with $\delta\in\{0,0.010,0.020\}$ & 10 \\
\bottomrule
\end{tabularx}
\vspace{-0.2in}
\end{table*}

CompAct/PRAC rank ratios refer to feature dimensions, independently of LoRA rank, with both PRAC subspaces nonempty. Activation-memory allowances preserve the common device-memory ceiling $B$. MILP solves use two CPU threads, a 300-second limit, and a 1\% relative gap, recording incumbents and termination status. AGoQ compares its native activation policy and original-precision RMSNorm retention under five outer checkpoint placements: none, all decoder layers, attention-only, MLP-only, or even-indexed decoder layers. These preserve internal recomputation and FSDP boundaries; compatible placements are also available when refining other compression baselines.

\textbf{Pilots and refinements.}
Each candidate runs 10 updates with seed 2026, measuring throughput, peak memory, validation score, and same-state gradient errors at updates 1, 5, 10; horizon-dependent schedules use the 60-update horizon. Refinements inherit settings from finite, in-budget pilots with mean gradient error $\le0.015$, selected by the throughput and tie-breaking rule below; if none passes, the lowest-error finite, in-budget pilot is used. Compatibility checks cover operators, shapes, and checkpoints. Equivalent Rockmate schedules are evaluated once, with schedule-construction costs recorded separately; INSTANT deduplication requires equivalent projection and refresh behavior over the workload, not merely matching ranks. Configuration-induced OOM/numerical failures count toward the 16-configuration limit but yield no performance result; retries and repeated measurements add no candidates. Search ends at the limit or when refinements are exhausted.

ALAM's group-size-128 refinements and INSTANT's oversampling-16 comparisons inherit the preferred checkpoint setting at each precision or energy threshold. INSTANT thresholds 0.995 and 0.9995 inherit checkpoint and oversampling settings from the preferred 0.99 and 0.999 configurations, respectively. Further refinements increase precision/rank for fidelity or compression/reduce retention for memory; upward rank-ratio and energy-threshold refinements move halfway toward one. Up to three finite, in-budget pilots become finalists: the native reference, the fastest with mean gradient error $\le0.015$, and the highest-validation-score candidate. The lowest-error candidate fills the second slot if necessary; duplicates or unavailable choices are replaced in validation-score order.

\textbf{Final selection.}
Finalists complete 60 updates with seeds 2026 and 2027, continuing the seed-2026 pilot with model, optimizer, RNG, and method states preserved. GC uses paired initialization and prompt schedules. For completed finalists $\mathcal C$, the feasible set is
\begin{equation}
\mathcal F(\epsilon_g)=\left\{a\in\mathcal C:\ M_a^{\max}\le B,\quad \bar e_{\mathrm{grad},a}\le\epsilon_g,\quad Q_a\ge Q_{\mathrm{GC}}-0.01\right\}.
\label{eq:configuration-selection}
\end{equation}
Here, $M_a^{\max}$ covers both runs and all ranks, $\bar e_{\mathrm{grad},a}$ averages prescribed same-state gradient checks, and $Q_a$ averages terminal validation scores across seeds. The default $\epsilon_g=0.015$ permits 1.5\% mean relative gradient error; scores are fractions, so the quality tolerance is one absolute percentage point. All constraints are fixed before tuning and shared across methods. For nonempty $\mathcal F(\epsilon_g)$, define $v_{\max}=\max_{a\in\mathcal F(\epsilon_g)}\bar v_a$, where $\bar v_a$ is mean actor-update throughput across tuning seeds, including online method overhead. Among candidates with $\bar v_a\ge0.995v_{\max}$, selection favors validation score, then lower gradient error and memory. If $\mathcal F$ is empty, no feasible configuration is reported; any separately shown highest-quality finalist is labeled with its violated constraints. The internal risk budget $\delta$ is selected under these criteria. Table~\ref{tab:delta-sensitivity} reports sensitivity to $\delta$, and Appendix~\ref{app:shared-tolerance} examines sensitivity to the shared gradient-error tolerance $\epsilon_g$.

\textbf{Transfer.}
Additional models/tasks/lengths evaluate up to four deduplicated candidates fixed before transfer: the primary selection, nearest higher-fidelity and lower-memory alternatives, and native reference. Selection follows the same procedure on the corresponding task-validation set. DSR-sub transfers frozen recovery hyperparameters with renewed profiling/calibration and no task-specific search. Final configurations are frozen before testing, and test scores never enter selection.

\subsubsection{Efficiency and Memory Measurements}
\label{app:setup-efficiency-measurements}

\textbf{Replay measurements.}
To isolate recovery-allocation efficiency under a fixed workload, main LoRA efficiency results use matched replays with identical parameters, rollout data, update tokens, and microbatch assignments across methods. Five paired repetitions follow warm-up. Gain is $100(\bar v_a/\bar v_{\mathrm{GC}}-1)$, where $\bar v$ is mean throughput across repetitions; variability is the sample SD of repetition-level paired gains, in percentage points. Budget, packing, ablation, sensitivity, and runtime-decomposition experiments use eight fixed replay records disjoint from configuration-selection records, with five warmed-up paired repetitions. Each repetition restores model, optimizer, method, and RNG states, including refresh age. Budget sweeps use $B/B_{\mathrm{GC}}\in\{1.00,1.05,1.10,1.15,1.20\}$ under common memory/error criteria; budget-aware methods regenerate schedules, while unchanged configurations reuse measurements. Timings include refresh events executed during replay.

\textbf{Timing and throughput.}
Synchronized actor-update time includes forward, CPU scheduling, capture/compression, recovery, backward, communication, optimizer work, gradient-snapshot overhead, and required online adaptation or validation. The maximum elapsed time across ranks defines the actor-update time. Throughput counts non-padding prompt and response tokens across ranks once per input sequence, excluding recomputation, and divides this count by the measured actor-update time. The timed region covers the actor update; rollout generation, log-probability evaluation outside the actor update, benchmark evaluation, and reporting-only gradient diagnostics lie outside this region.

\textbf{Memory.}
NVML samples memory every 10 ms. Reported actor-update peaks and budget checks use maxima across ranks, measured updates, and repetitions. PyTorch allocated/reserved peaks supplement transient checks near budget boundaries. These measurements cover candidate capture and backward recovery, including method-specific retained states and workspaces.

\textbf{Setup and online overhead.}
One-time setup, outside the timed actor-update region, includes graph processing, utility profiling, initial calibration, candidate-plan construction and validation, and required reference-gradient computations. Initialization-calibrated statistics are reused during training. Reported speedups characterize steady-state actor-update efficiency, including online exposure refresh and allocation.

\subsubsection{Gradient-Fidelity Evaluation}
\label{app:setup-gradient-fidelity}

\textbf{Training-update fidelity.}
Checks occur at updates $\{1,5,10,20,30,40,50,60\}$ for LoRA and update 1 plus every 20 updates for DSR-sub. Replays for the main training evaluations compare each method with GC using identical saved pre-update parameters, rollout/loss inputs, microbatch assignments, and model RNG states. Method-specific calibration/projection states, schedules, refresh counters, and RNG states are also restored; checkpoint capture is timed separately from learner updates. Post-update mechanism analyses follow the separate protocol in Appendix~\ref{app:exposure-details}. All trainable-parameter gradients are compared after accumulation/reduction and before gradient-norm clipping:
\begin{equation}
e_{\mathrm{grad},a,s,t}=
\sqrt{\frac{\sum_j\lVert g_{a,s,t,j}-g_{\mathrm{GC}\mid a,s,t,j}\rVert_2^2}
{\max\{\sum_j\lVert g_{\mathrm{GC}\mid a,s,t,j}\rVert_2^2,10^{-30}\}}},
\label{eq:measured-gradient-error}
\end{equation}
where disjoint matching shards $j$ count each parameter once. FP64 statistics yield within-run means, across-seed summaries, and maximum error.

\textbf{Numerical controls.}
Independent GC executions and H/R-only recovery are compared against GC on the same saved Qwen2.5-3B / DeepMath10K updates, fixing parameters, inputs, loss weights, RNG states, and microbatch assignments. Execution-matched controls additionally fix deterministic operators, precision settings, and gradient accumulation/reduction orders, following Proposition~\ref{prop:exact-recovery}. Errors use Eq.~\ref{eq:measured-gradient-error} after accumulation/reduction and before clipping.

\subsubsection{Downstream Training and Evaluation}
\label{app:setup-downstream-evaluation}

\textbf{Training and reporting.}
Final LoRA quality comparisons use 60 updates with seeds 1234, 2025, and 3407; DSR-sub uses 200 updates with seed 1234. Within each seed, methods share initialization and prompt schedules but generate their own on-policy responses. Configurations remain frozen, and observed constraint violations are reported. Final test percentages are summarized as across-seed means, sample SDs, and paired method--GC differences; DSR-sub reports a single trajectory.

\textbf{Evaluation protocol.}
Table~\ref{tab:downstream-evaluation} lists datasets and metrics. Evaluation occurs every 20 updates, with scores taken from terminal checkpoints. Each problem receives one greedy completion with temperature 0, top-$p$ 1, repetition penalty 1, and a 4096-token limit, stopping at EOS/end-of-turn. Methods share BF16 inference, backend, tokenizer, chat template, and task prompts per model. Validation uses the same decoding and task-specific scorers on the splits in Table~\ref{tab:data-splits}.

\begin{table*}[t]
\centering
\caption{Test sets and evaluation metrics.}
\label{tab:downstream-evaluation}
\renewcommand{\arraystretch}{0.9}
\footnotesize
\begin{tabularx}{\linewidth}{@{}lXrX@{}}
\toprule
Benchmark & Split/version & Problems & Metric \\
\midrule
MATH500~\citep{lightman2023lets} & Full test set & 500 & Final-answer accuracy \\
GSM8K~\citep{cobbe2021training} & \texttt{main} test & 1,319 & Numerical-answer accuracy \\
LiveCodeBench~\citep{jain2024livecodebench} & \texttt{release\_v6}; code generation; full tests & 1,055 & Greedy pass@1 \\
MBPP+~\citep{liu2023evalplus} & v0.2.0 & 378 & Greedy pass@1 \\
K\&K~\citep{xie2025logicrl} & Three- to seven-person test; 100 problems per size & 500 & Macro-averaged complete-assignment accuracy \\
K\&K transfer & Eight-person test & 100 & Complete-assignment accuracy \\
ZebraLogic~\citep{lin2025zebralogic} & Full benchmark & 1,000 & Puzzle-level accuracy \\
\bottomrule
\end{tabularx}
\vspace{-0.2in}
\end{table*}

\textbf{Prompts and scoring.}
Prompts are zero-shot except ZebraLogic's standard one-shot reasoning/JSON template. Math requests reasoning and a boxed answer; code provides the required interface and public examples; K\&K requests all identities. Qwen2.5-Math routines extract, normalize, and equivalence-check answers. Code uses the LiveCodeBench checker (\texttt{--not\_fast}, \texttt{--timeout=6}) and EvalPlus v0.3.1 defaults in matched isolated environments; pass@1 requires all tests, including MBPP+ additions, to pass. K\&K requires exact identities, averaging equally across five 100-problem three-to-seven-person partitions per seed; eight-person accuracy is separate. ZebraLogic requires every grid entry correct. Missing/unparseable answers and errors/timeouts score zero; truncated outputs are scored as returned without changing the denominator.

\subsection{Additional Baselines}
\label{app:full-baseline-comparison}
Table~\ref{tab:full-baseline-comparison} adds activation-compression and selective-recomputation baselines under Table~\ref{tab:main-results}'s protocol. Recovery selection improves in-budget throughput. Under identical memory ceilings, \method gains 10.16--13.47\% over GC across three models, exceeding all added baselines with mean gradient errors of 0.0091--0.0104 (Table~\ref{tab:main-results}). This extends its throughput advantage to activation-compression and selective-recomputation baselines under shared memory and fidelity constraints.

\begin{table*}[t]
\centering
\setlength{\aboverulesep}{1.5pt}
\setlength{\belowrulesep}{1.5pt}
\setlength{\fboxsep}{0.5pt}
\caption{Additional baselines on DeepMath10K with 2K responses.}
\label{tab:full-baseline-comparison}
\scriptsize
\setlength{\tabcolsep}{2.5pt}
\renewcommand{\arraystretch}{0.9}
\begin{tabular}{@{}c|c|ccc|c|cc@{}}
\toprule
\multirow{2}{*}{\textbf{Model}} & \multirow{2}{*}{\textbf{Method}} & \multicolumn{3}{c|}{\textbf{Efficiency}} & \multicolumn{1}{c|}{\textbf{Gradient Fidelity}} & \multicolumn{2}{c@{}}{\textbf{Training Quality}} \\
\cmidrule(lr){3-5}\cmidrule(lr){6-6}\cmidrule(l){7-8}
& & Peak MiB $\downarrow$ & Tok./s $\uparrow$ & Gain (\%) $\uparrow$ & Grad. err. $\downarrow$ & MATH500 $\uparrow$ & GSM8K $\uparrow$ \\
\midrule
\multirow{3}{*}{\textbf{Qwen2.5-3B}}
& GACT\venuetag{ICML'22} & 28,329 & 2,397.52 & $-2.37_{\pm0.39}$ & 0.0146 & $59.27_{\pm0.70}$ & $76.67_{\pm0.31}$ \\
& SAC\venuetag{PyTorch'25} & 30,680 & 2,336.37 & $-4.86_{\pm0.82}$ & 0.0029 & $60.20_{\pm0.60}$ & $77.38_{\pm0.19}$ \\
& CompAct\venuetag{NAACL'25} & 29,058 & 2,346.19 & $-4.46_{\pm0.47}$ & 0.0124 & $59.73_{\pm0.50}$ & $76.90_{\pm0.31}$ \\
\midrule
\multirow{3}{*}{\textbf{Phi-3.5-mini}}
& GACT\venuetag{ICML'22} & 17,338 & 2,695.95 & $+3.28_{\pm0.29}$ & 0.0142 & $43.33_{\pm0.70}$ & $85.27_{\pm0.27}$ \\
& SAC\venuetag{PyTorch'25} & 18,720 & 2,533.33 & $-2.95_{\pm0.72}$ & 0.0029 & $44.07_{\pm0.50}$ & $85.95_{\pm0.19}$ \\
& CompAct\venuetag{NAACL'25} & 17,794 & 2,624.69 & $+0.55_{\pm0.56}$ & 0.0118 & $43.73_{\pm0.70}$ & $85.49_{\pm0.27}$ \\
\midrule
\multirow{3}{*}{\textbf{Llama-3.1-8B}}
& GACT\venuetag{ICML'22} & 38,772 & 1,518.59 & $+1.63_{\pm0.91}$ & 0.0143 & $54.40_{\pm0.60}$ & $82.23_{\pm0.27}$ \\
& SAC\venuetag{PyTorch'25} & 41,520 & 1,410.70 & $-5.59_{\pm0.64}$ & 0.0030 & $55.00_{\pm0.60}$ & $82.87_{\pm0.23}$ \\
& CompAct\venuetag{NAACL'25} & 40,018 & 1,536.67 & $+2.84_{\pm0.18}$ & 0.0122 & $54.80_{\pm0.60}$ & $82.54_{\pm0.27}$ \\
\bottomrule
\end{tabular}
\vspace{-0.2in}
\end{table*}

\subsection{Policy-Update Exposure Details}
\label{app:exposure-details}

\textbf{Post-update diagnostic protocol.} For a reference training update $k$, let $\theta_k^{-}$ denote the policy that generates rollout batch $\mathcal B_k$, and let $\theta_k^{+}$ denote the parameters after the reference learner completes its single optimizer step. We replay $\mathcal B_k$ at $\theta_k^{+}$ while retaining its original advantages, update masks, and old-policy log-probabilities. Diagnostic likelihood ratios are
\begin{equation}
r_{t,k}^{\mathrm{diag}}=\pi_{\theta_k^{+}}(o_t\mid x_t)/\pi_{\theta_k^{-}}(o_t\mid x_t).
\label{eq:diagnostic-ratio}
\end{equation}
where $x_t$ contains prompt and response prefix, and both likelihoods use training probability convention. Each replay evaluates clipped objective at this diagnostic state. Recovery comparisons share identical parameters, rollout inputs, microbatch assignments, and model RNG states. Single-unit L interventions are compared with exact recovery at the same state to obtain $y_{u,b}$. Diagnostic gradients are discarded without an optimizer step, and replay leaves state unchanged.

\textbf{Prediction metrics.} For each held-out microbatch $b$, predictors are evaluated across the same L-eligible units using matched single-unit gradient-error targets $y_{u,b}$. Predictor parameters use only the designated calibration split. Spearman correlation, Top-10\% recall with $k_b=\max(1,\lceil0.1|\mathcal U_{L,b}|\rceil)$, and AUROC for $\mathbf 1[y_{u,b}>\epsilon_{\mathrm{unit}}]$ are computed within each microbatch and summarized within workload, followed by macro-averaging. Spearman uses average ranks for ties; recall uses a fixed target-independent unit order. Constant-score predictors have undefined Spearman correlation and AUROC $0.5$ when both target classes are present. AUROC excludes single-class microbatches.

\textbf{Structural sparsity in diagnostic replays.} Early and late groups aggregate post-update replays associated with reference training updates 9--10 and 11--12, respectively. Sparsity statistics use the response slots in the padded diagnostic records, including trailing padding and excluding prompt slots. Zero-policy-gradient causes are counted exclusively as padding slots, valid response positions with zero advantage, and valid nonzero-advantage positions on a flat clipped policy-loss branch. All three fractions use the padded response-slot count as their denominator. Top-decile concentration ranks the same slots by $|\omega_t|$, with zero coefficients assigned to padding. Exposure scores use the non-padding support $\mathcal T_b$. A nonzero-advantage position is clipped when
\begin{equation}
(\widehat A_t>0\land r_{t,k}^{\mathrm{diag}}>1+\epsilon)\lor(\widehat A_t<0\land r_{t,k}^{\mathrm{diag}}<1-\epsilon).
\label{eq:diagnostic-clipping}
\end{equation}
The zero-policy-gradient fraction rises from 54.8\% to 73.6\%, while the Top-10\% share of complete $|\omega_t|$ rises from 52.4\% to 66.8\%. Complete $\omega_t$ includes the KL contribution at unmasked positions, including those with zero policy-gradient contribution.

\textbf{Prediction signals.} All exposure predictors use the same activation weights from Eq.~\ref{eq:activation-token-weights}. Advantage-only exposure is $\widehat{\mathcal E}^{A}_{u,b}=\sum_{t\in\mathcal T_b}|\widehat A_t|\alpha_{t,u,b}$, complete exposure is $\widehat{\mathcal E}_{u,b}=\sum_{t\in\mathcal T_b}|\omega_t|\alpha_{t,u,b}$, and $\psi_{u,b}=\kappa_{c(u)}\widehat{\mathcal E}_{u,b}\bar e_u^L$ adds calibrated susceptibility and reconstruction distortion. Activation weights are computed from the exact forward pass shared by the matched interventions. Held-out intervention errors are used only as evaluation targets.

\begin{table*}[t]
\centering
\caption{Risk prediction underlying Fig.~\ref{fig:exposure-validation}(b), averaged over held-out replays.}
\label{tab:exposure-per-workload}
\scriptsize
\renewcommand{\arraystretch}{0.9}
\setlength{\tabcolsep}{3.2pt}
\begin{tabular}{lrrrrrrrrr}
\toprule
& \multicolumn{3}{c}{DeepMath10K} & \multicolumn{3}{c}{TACO-Verified} & \multicolumn{3}{c}{Logic-RL} \\
\cmidrule(lr){2-4}\cmidrule(lr){5-7}\cmidrule(lr){8-10}
Signal & $\rho$ & Recall & AUROC & $\rho$ & Recall & AUROC & $\rho$ & Recall & AUROC \\
\midrule
Reconstruction error & 0.28 & 0.37 & 0.63 & 0.24 & 0.35 & 0.61 & 0.26 & 0.36 & 0.62 \\
Static type/layer & 0.32 & 0.42 & 0.66 & 0.29 & 0.39 & 0.64 & 0.30 & 0.40 & 0.65 \\
GACT-style sensitivity & 0.41 & 0.54 & 0.72 & 0.39 & 0.52 & 0.70 & 0.40 & 0.53 & 0.71 \\
ALAM-style sensitivity & 0.44 & 0.56 & 0.73 & 0.40 & 0.54 & 0.72 & 0.42 & 0.55 & 0.72 \\
Advantage only & 0.47 & 0.60 & 0.76 & 0.45 & 0.58 & 0.74 & 0.46 & 0.59 & 0.75 \\
$\widehat{\mathcal E}_u$ & 0.69 & 0.75 & 0.84 & 0.66 & 0.72 & 0.82 & 0.68 & 0.73 & 0.83 \\
$\psi_u$ & \textbf{0.78} & \textbf{0.81} & \textbf{0.89} & \textbf{0.75} & \textbf{0.78} & \textbf{0.86} & \textbf{0.77} & \textbf{0.80} & \textbf{0.88} \\
\bottomrule
\end{tabular}
\vspace{-0.2in}
\end{table*}

\textbf{Exposure and calibrated risk provide successive predictive gains.}
Table~\ref{tab:exposure-per-workload} shows complete exposure raises Spearman by 0.21--0.22 over advantage-only weighting across workloads; susceptibility and reconstruction distortion add 0.09, yielding $\rho=0.75$--$0.78$. Gains support current-update weighting and calibrated sensitivity for identifying high-distortion units.

\begin{wraptable}{r}{0.45\textwidth}
\centering
\vspace{-0.3in}
\caption{Exposure-score ablations.}
\label{tab:exposure-ablation}
\scriptsize
\setlength{\tabcolsep}{1pt}
\begin{tabular}{@{}lrrr@{}}
\toprule
Variant & $\rho$ $\uparrow$ & Top-10\% recall $\uparrow$ & AUROC $\uparrow$ \\
\midrule
Full $\psi_u$ & \textbf{0.78} & \textbf{0.81} & \textbf{0.89} \\
\quad w/o ratio/clipping & 0.65 & 0.68 & 0.81 \\
\quad w/o update mask & 0.78 & 0.81 & 0.89 \\
\quad w/o KL contribution & 0.77 & 0.79 & 0.88 \\
\quad w/o activation weighting & 0.67 & 0.70 & 0.82 \\
\quad w/o susceptibility & 0.74 & 0.76 & 0.86 \\
\bottomrule
\end{tabular}
\vspace{-0.1in}
\end{wraptable}

\textbf{Component ablations.}
Table~\ref{tab:exposure-ablation} removes one score component at a time while fixing diagnostic states, intervention units, target gradients, and the full predictor's calibrated quantities. Removing ratio/clipping replaces $\chi_t r_t$ by $1$ in the policy contribution to the score. The update-mask control sets $m_t=1$ on $\mathcal T_b$ in both policy and KL score contributions while preserving the original normalization. Response-validity masking already assigns this value throughout $\mathcal T_b$, so the control produces identical scores and prediction metrics. Removing KL deletes its contribution to $\omega_t$. Removing activation weighting sets $\alpha_{t,u,b}=1$. Removing susceptibility sets $\kappa_{c(u)}=1$ while retaining $\bar e_u^L$. Exact and perturbed target gradients always use the original objective. These comparisons measure the effect of each component in the calibrated predictor.

\subsection{Canonical Memory--Throughput Frontier}
\label{app:frontier-details}
Table~\ref{tab:frontier} reports Qwen2.5-3B / DeepMath10K with 2K responses, 4096-token packing, and $B_{\mathrm{GC}}=28{,}874$ MiB. Budgets cap measured peak memory; unchanged configurations reuse measurements. \method achieves the highest mean throughput at all tested budgets. At $B=B_{\mathrm{GC}}$, it gains 3.47\% with 8.3\% L and no H. At $1.20B_{\mathrm{GC}}$, the gain reaches 13.97\% as R falls from 91.7\% to 61.1\%, with maximum gradient error 0.0141. Thus, recovery allocation converts memory headroom into less recomputation and higher throughput at fixed packing. Table~\ref{tab:headroom-control} compares this with increased packing.

\begin{table*}[t]
\centering
\caption{Recovery configurations under memory ceilings.}
\label{tab:frontier}
\scriptsize
\renewcommand{\arraystretch}{0.9}
\setlength{\tabcolsep}{2.8pt}
\begin{tabular}{lllrrrrrr}
\toprule
$B/B_{\mathrm{GC}}$ & Budget (MiB) & Method & Peak (MiB) & Tok./s $\uparrow$ & Gain $\uparrow$ & Mean err. $\downarrow$ & Max err. $\downarrow$ & H/L/R (\%) \\
\midrule
1.00 & 28,874 & GC / All-R & 28,874 & 2,455.72 & -- & 0.0000 & 0.0000 & $0/0/100$ \\
& & AGoQ & 28,776 & 2,533.81 & $+3.18\%$ & 0.0110 & -- & -- \\
& & Adacc & 28,764 & 2,476.10 & $+0.83\%$ & 0.0135 & -- & -- \\
& & \method & 28,843 & \textbf{2,540.93} & $\mathbf{+3.47\%}$ & 0.0068 & 0.0074 & $0/8.3/91.7$ \\
\midrule
1.05 & 30,318 & Rockmate & 29,992 & 2,066.73 & $-15.84\%$ & 0.0031 & -- & -- \\
& & AGoQ & 28,776 & 2,533.81 & $+3.18\%$ & 0.0110 & -- & -- \\
& & Adacc & 30,093 & 2,546.83 & $+3.71\%$ & 0.0119 & -- & -- \\
& & \method & 30,269 & \textbf{2,608.96} & $\mathbf{+6.24\%}$ & 0.0087 & 0.0095 & $2.8/11.1/86.1$ \\
\midrule
1.10 & 31,761 & SAC & 30,680 & 2,336.37 & $-4.86\%$ & 0.0029 & -- & -- \\
& & AGoQ & 28,776 & 2,533.81 & $+3.18\%$ & 0.0110 & -- & -- \\
& & Adacc & 31,400 & 2,579.00 & $+5.02\%$ & 0.0107 & -- & -- \\
& & \method & 31,649 & \textbf{2,705.22} & $\mathbf{+10.16\%}$ & 0.0104 & 0.0114 & $8.3/16.7/75.0$ \\
\midrule
1.15 & 33,205 & SAC & 30,680 & 2,336.37 & $-4.86\%$ & 0.0029 & -- & -- \\
& & AGoQ & 28,776 & 2,533.81 & $+3.18\%$ & 0.0110 & -- & -- \\
& & Adacc & 32,811 & 2,616.57 & $+6.55\%$ & 0.0104 & -- & -- \\
& & \method & 33,106 & \textbf{2,765.88} & $\mathbf{+12.63\%}$ & 0.0122 & 0.0133 & $16.7/16.7/66.6$ \\
\midrule
1.20 & 34,649 & SAC & 30,680 & 2,336.37 & $-4.86\%$ & 0.0029 & -- & -- \\
& & AGoQ & 28,776 & 2,533.81 & $+3.18\%$ & 0.0110 & -- & -- \\
& & Adacc & 34,193 & 2,637.69 & $+7.41\%$ & 0.0101 & -- & -- \\
& & \method & 34,552 & \textbf{2,798.78} & $\mathbf{+13.97\%}$ & 0.0130 & 0.0141 & $25.0/13.9/61.1$ \\
\bottomrule
\end{tabular}
\vspace{-0.2in}
\end{table*}

\subsection{Generalization Across Tasks, Lengths, and Models}
\label{app:generalization}

\subsubsection{Cross-Task Transfer}
\label{app:system-robustness}
At $B=1.10B_{\mathrm{GC}}$, \method gains 9.00\% on TACO-Verified and 8.64\% on Logic-RL K\&K with Qwen2.5-3B and 2K responses (Table~\ref{tab:cross-task-system}). It leads the evaluated configurations on both tasks, with paired mean downstream-score differences from GC within 0.10 percentage points across four benchmarks, extending the throughput advantage to code and logical reasoning.

\subsubsection{Response-Length and Model Transfer}
\label{app:scaling-details}

Fig.~\ref{fig:length-model-transfer} evaluates response-length and model transfer using model-specific GC references. Panels~(a)--(c) compare 1K, 2K, and 4K responses on Qwen2.5-3B and Phi-3.5-mini under $B=1.10B_{\mathrm{GC}}$. Panels~(d)--(f) compare three models with 4K responses under the tighter $B=1.05B_{\mathrm{GC}}$.

\begin{figure}[t]
    \centering
    \includegraphics[width=0.95\linewidth]{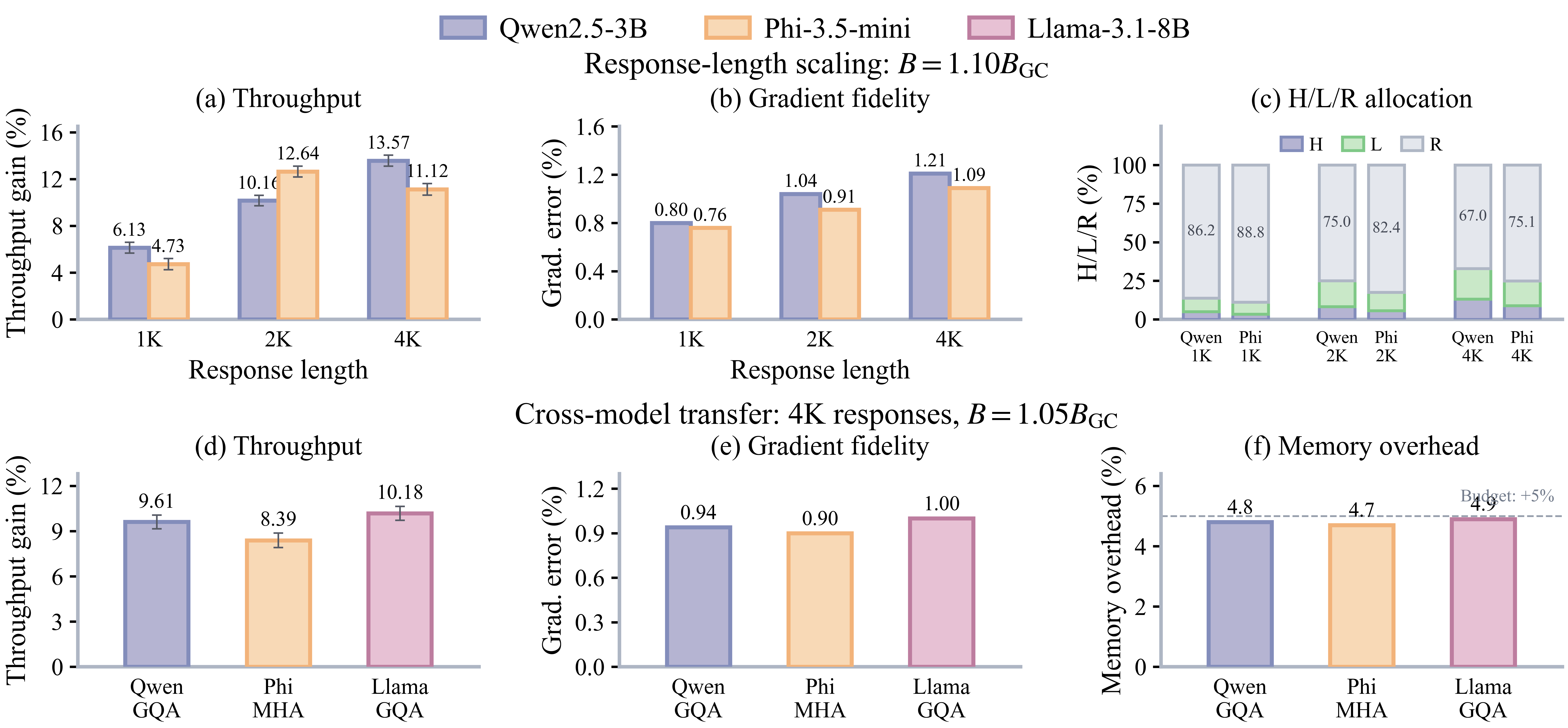}
    \vspace{-0.2in}
    \caption{Response-length and model transfer. Throughput gains are relative to paired GC, with error bars showing sample SDs. Gradient errors and additional peak memory are expressed as percentages.}
    \label{fig:length-model-transfer}
    \vspace{-0.1in}
\end{figure}

\textbf{Recovery gains persist across response lengths.}
All six settings improve throughput over paired GC, with gains of 4.73--13.57\% across 1K--4K responses and mean relative gradient errors at or below 1.21\% (Fig.~\ref{fig:length-model-transfer}a,b). Qwen's gain increases from 6.13\% at 1K to 13.57\% at 4K, while Phi achieves its largest gain at 2K. Thus, the benefit persists across the tested lengths without requiring throughput gains to increase monotonically with response length.

\textbf{Recovery allocation shifts toward retention at longer responses.}
Fig.~\ref{fig:length-model-transfer}c shows increasing H and L fractions as response length grows. From 1K to 4K, the R fraction decreases from 86.2\% to 67.0\% on Qwen and from 88.8\% to 75.1\% on Phi. These allocations show how the selected recovery policies vary with response length while maintaining the reported gradient fidelity.

\textbf{Gains extend across attention architectures under tighter budgets.}
Across the three GQA/MHA models, \method improves throughput by 8.39--10.18\%, with mean relative gradient errors at or below 1.00\% (Fig.~\ref{fig:length-model-transfer}d,e). Additional peak memory is 4.8\%, 4.7\%, and 4.9\% for Qwen, Phi, and Llama, respectively, remaining below the shared 5\% allowance (Fig.~\ref{fig:length-model-transfer}f). These results support an efficiency benefit across the tested architectures with modest memory headroom.

\subsubsection{Training Quality and Full-Parameter Transfer}
\label{app:quality-full-parameter-transfer}

\begin{wraptable}{r}{0.48\textwidth}
\centering
\vspace{-0.32in}
\caption{Paired downstream quality changes relative to GC, in percentage points.}
\label{tab:quality-generalization-paired}
\footnotesize
\renewcommand{\arraystretch}{0.9}
\setlength{\tabcolsep}{4pt}
\resizebox{\linewidth}{!}{%
\begin{tabular}{@{}lcc@{}}
\toprule
Setting & $\Delta$ Metric 1 & $\Delta$ Metric 2 \\
\midrule
TACO-Verified / Qwen2.5-3B & $-0.06_{\pm0.36}$ & $+0.09_{\pm0.40}$ \\
Logic-RL K\&K / Qwen2.5-3B & $+0.07_{\pm0.31}$ & $+0.10_{\pm0.17}$ \\
Phi-3.5-mini / DeepMath10K & $+0.53_{\pm0.50}$ & $+0.13_{\pm0.19}$ \\
Llama-3.1-8B / DeepMath10K & $+0.53_{\pm0.46}$ & $+0.23_{\pm0.23}$ \\
\bottomrule
\end{tabular}}
\vspace{-0.2in}
\end{wraptable}

Table~\ref{tab:quality-generalization-paired} reports mean $\pm$ sample SD of within-seed method--GC score differences in percentage points, computed before rounding. Metric~1/2 denotes LiveCodeBench/MBPP+ for TACO-Verified, three- to seven-person K\&K/ZebraLogic for Logic-RL, and MATH500/GSM8K for DeepMath10K. Paired differences remain within 0.53 percentage points across settings, supporting downstream quality close to GC at operating points used for throughput comparisons. The single-seed full-parameter experiment in Fig.~\ref{fig:dsr-learning} uses Llama-3.2-3B-Instruct on DSR-sub with 3K responses. GC improves from 9.60\% to 47.80\% MATH500 accuracy, while \method reaches 48.40\%, extending the learning-quality evidence beyond LoRA.

\subsubsection{Full-Parameter Memory--Throughput Trade-offs}
\label{app:full-parameter-budget-planning}

\textbf{Setup.}
Fig.~\ref{fig:full-parameter-runtime} evaluates Llama-3.2-3B-Instruct / DSR-sub full-parameter GRPO on one A100 80GB, with GPU-resident FP32 parameters, gradients, and AdamW moments; BF16 computation; FlashAttention-backed SDPA; padding removal; and 1024/3072-token prompt/response limits. Fixed GC uses 8192 valid tokens per microbatch, yielding $B_{\mathrm{GC}}=59.37$ GiB and $v_{\mathrm{GC,fixed}}=5{,}452$ tokens/s. For $B/B_{\mathrm{GC}}\in\{1.00,1.05,\ldots,1.20\}$, tuned GC adjusts microbatches with decoder-layer checkpointing; \method selects recovery and targets from $\{8192,10240,12288\}$. Inputs and loss normalization remain fixed. Actor-update timing excludes rollouts and old-policy/reference log-probability computation. Valid prompt/response tokens are counted once, with $t_{10^6,a}=10^6/v_a$ and $\mathrm{Gain}_a=100(v_a/v_{\mathrm{GC,fixed}}-1)$. Fig.~\ref{fig:dsr-learning} learning curves use the four-rank configuration in Table~\ref{tab:execution-configurations}.

\begin{figure}[t]
\centering
\includegraphics[width=\linewidth]{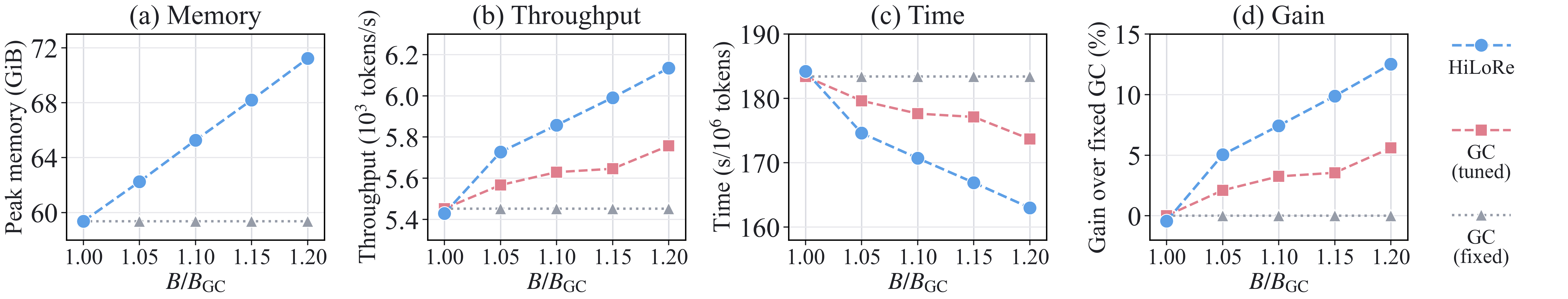}
\vspace{-0.3in}
\caption{Full-parameter memory--throughput trade-offs. GC (tuned) adjusts microbatch size per budget with decoder-layer checkpointing.}
\label{fig:full-parameter-runtime}
\vspace{-0.2in}
\end{figure}

\textbf{Results.}
At $1.05$--$1.20B_{\mathrm{GC}}$, \method improves throughput by 5.04--12.52\% over fixed GC and 2.88--6.56\% over tuned GC. Joint recovery and microbatch allocation thus converts memory headroom into higher throughput than microbatch tuning alone in this full-parameter setting.

\subsection{Sensitivity to the Shared Gradient-Error Tolerance}
\label{app:shared-tolerance}

We compare \method, Adacc, AGoQ, PRAC, and INSTANT on Qwen2.5-3B / DeepMath10K with rank-16 LoRA, 2K responses, and $\epsilon_g\in\{0.0100,0.0125,0.0150,0.0175,0.0200\}$, fixing $B=1.10B_{\mathrm{GC}}$ and the terminal validation-score requirement. Each method uses a fixed search space covering all tolerances, capped at 16 distinct configurations. Applying the pilot finalist rule at each $\epsilon_g$ yields a union for complete development evaluation. Each tolerance selects from this shared pool using Eq.~\ref{eq:configuration-selection} and the throughput tie rule. Independent fixed-workload evaluation follows Appendix~\ref{app:experimental-setup}, reusing measurements for repeated selections.

\begin{wraptable}{r}{0.5\textwidth}
\centering
\vspace{-0.3in}
\caption{Shared gradient-error tolerance sensitivity. Each cell reports throughput gain over GC (\%) / mean gradient error (\%).}
\label{tab:shared-tolerance-sensitivity}
\scriptsize
\setlength{\tabcolsep}{3pt}
\resizebox{\linewidth}{!}{%
\begin{tabular}{@{}lccccc@{}}
\toprule
$\epsilon_g$ & \method & Adacc & AGoQ & PRAC & INSTANT \\
\midrule
1.00\% & 8.76 / 0.78 & 4.30 / 0.95 & 2.55 / 0.93 & 1.85 / 0.94 & 1.70 / 0.96 \\
1.25\% & 10.16 / 1.04 & 5.02 / 1.07 & 3.18 / 1.10 & 2.76 / 1.12 & 2.43 / 1.05 \\
1.50\% & 10.16 / 1.04 & 5.02 / 1.07 & 3.18 / 1.10 & 2.76 / 1.12 & 2.43 / 1.05 \\
1.75\% & 10.94 / 1.58 & 7.10 / 1.68 & 6.20 / 1.73 & 6.70 / 1.65 & 7.40 / 1.67 \\
2.00\% & 11.28 / 1.81 & 8.10 / 1.93 & 7.20 / 1.96 & 8.70 / 1.94 & 9.70 / 1.91 \\
\bottomrule
\end{tabular}}
\vspace{-0.12in}
\end{wraptable}

\textbf{Throughput advantage across nearby tolerances.}
Table~\ref{tab:shared-tolerance-sensitivity} shows \method gains 8.76--11.28\% over GC across five tolerances, exceeding the strongest baseline by 1.58--5.14 percentage points. The 1.25\% and 1.50\% settings select identical configurations, retaining a 10.16\% gain with 1.04\% mean gradient error. The throughput advantage remains stable around the default tolerance.

\subsection{Ablation, Sensitivity, and Runtime Details}
\label{app:ablation-robustness}

\begin{wraptable}{r}{0.49\textwidth}
\centering
\vspace{-0.3in}
\caption{Update-regime statistics averaged over held-out post-update diagnostic replays.}
\label{tab:update-regime-analysis}
\setlength{\tabcolsep}{2pt}
\resizebox{\linewidth}{!}{%
\begin{tabular}{@{}lrrrrr@{}}
\toprule
Regime & Norm. $|\omega_t|$ & Norm. exposure & H (\%) & L (\%) & L drift \\
\midrule
Low $|\widehat A_t|$ & 0.41 & 0.48 & 4.8 & 18.5 & 0.0062 \\
High $|\widehat A_t|$ & 1.84 & 1.71 & 14.7 & 9.4 & 0.0127 \\
Clipped & 0.19 & 0.27 & 2.9 & 21.3 & 0.0051 \\
Active unclipped & 1.37 & 1.43 & 12.6 & 11.2 & 0.0115 \\
\bottomrule
\end{tabular}}
\vspace{-0.2in}
\end{wraptable}

\textbf{System-ablation protocol.}
Qwen2.5-3B / DeepMath10K ablations use pre-update replay protocol with $B_{\mathrm{GC}}=28{,}874$ MiB and $B=31{,}761$ MiB, $\delta^\star$, 16 calibration microbatches, and per-microbatch exposure/risk refresh. Block recovery uses Transformer blocks; size-based/uniform H replace utility; \method-HR uses H/R. Budget sweeps retain capture costs when $\delta=0$ disables L. Errors reference held-out All-R.

\textbf{Diagnostic update regimes.}
Table~\ref{tab:update-regime-analysis} uses the post-update protocol in Appendix~\ref{app:exposure-details} with same memory budget and allocation settings. Candidate plans, units, profiled utilities, and calibrated quantities are restored for comparisons; current coefficients, exposure scores, and recovery actions are recomputed at diagnostic state. Low/high advantage denotes the bottom/top $|\widehat A_t|$ quartiles. Clipped positions satisfy $m_t=1$, $\widehat A_t\ne0$, and Eq.~\ref{eq:diagnostic-clipping}; active-unclipped positions satisfy $m_t=1$, $\widehat A_t\ne0$, and $\chi_t=1$. Complete coefficients retain KL contribution. Coefficients use rollout-batch normalization. For regime $G$, define $w_{u,b,G}=\sum_{t\in G_b}\alpha_{t,u,b}$. H/L fractions are weighted means of the corresponding unit-action indicators with weights $w_{u,b,G}$ across diagnostic records. Normalized exposure is the activation-weighted mean of $|\omega_t|$ over token--unit pairs in $G$, divided by the corresponding mean over all response token--unit pairs. L drift is the $w_{u,b,G}$-weighted mean of $y_{u,b}$ over L-selected units. Each $y_{u,b}$ measures the complete gradient deviation from a whole-unit intervention.

\textbf{Sensitivity.}
Sweeps change one factor while fixing replay inputs, candidate plans, units, utilities, memory budgets, and held-out updates. Calibration uses nested 32-microbatch subsets. With 16 calibration microbatches, dynamic scores refresh every $r\in\{1,4,8\}$ microbatches within each unit/shape class and on cache misses. A refresh collects activation weights and recomputes exposure from current token coefficients; intermediate microbatches reuse the cached per-unit risk scores. Static risk uses each unit's mean calibrated risk over the initialization microbatches. Susceptibility and reconstruction statistics remain fixed within each run. Allocation runs every microbatch in all configurations, and timing includes the collection work performed by each refresh schedule.

\begin{table*}[t]
\centering
\caption{Sensitivity to risk budget, calibration size, and refresh interval. Scheduling overhead includes exposure/risk refresh and allocation.}
\label{tab:delta-sensitivity}
\label{tab:calibration-size}
\label{tab:refresh-sensitivity}
\scriptsize
\setlength{\tabcolsep}{2pt}
\renewcommand{\arraystretch}{0.9}
\begin{tabular}{llrrrrr}
\toprule
Sweep & Setting & Gain (\%) $\uparrow$ & Mean err. $\downarrow$ & Max err. $\downarrow$ & Scheduling overhead (\%) & H/L/R (\%) \\
\midrule
\multirow{6}{*}{$\delta/\delta^\star$}
& 0 & $+6.14\pm0.48$ & 0.0030 & 0.0034 & 0.19 & $13.9/0/86.1$ \\
& 0.25 & $+7.35\pm0.47$ & 0.0057 & 0.0064 & 0.21 & $11.1/5.6/83.3$ \\
& 0.5 & $+8.76\pm0.46$ & 0.0078 & 0.0088 & 0.21 & $8.3/11.1/80.6$ \\
& 1 & $+10.16\pm0.44$ & 0.0104 & 0.0114 & 0.21 & $8.3/16.7/75.0$ \\
& 2 & $+10.94\pm0.46$ & 0.0158 & 0.0176 & 0.21 & $5.6/22.2/72.2$ \\
& 4 & $+11.28\pm0.49$ & 0.0181 & 0.0206 & 0.22 & $2.8/30.6/66.6$ \\
\midrule
\multirow{4}{*}{Calibration batches}
& 4 & $+8.92\pm0.52$ & 0.0119 & 0.0157 & 0.21 & $11.1/11.1/77.8$ \\
& 8 & $+9.67\pm0.48$ & 0.0110 & 0.0128 & 0.21 & $8.3/13.9/77.8$ \\
& 16 & $+10.16\pm0.44$ & 0.0104 & 0.0114 & 0.21 & $8.3/16.7/75.0$ \\
& 32 & $+10.18\pm0.45$ & 0.0103 & 0.0115 & 0.21 & $8.3/16.7/75.0$ \\
\midrule
\multirow{4}{*}{Refresh interval}
& Every microbatch & $+10.16\pm0.44$ & 0.0104 & 0.0114 & 0.21 & $8.3/16.7/75.0$ \\
& Every 4 microbatches & $+10.04\pm0.45$ & 0.0110 & 0.0132 & 0.12 & $8.3/13.9/77.8$ \\
& Every 8 microbatches & $+9.72\pm0.48$ & 0.0118 & 0.0147 & 0.11 & $11.1/11.1/77.8$ \\
& Static & $+8.31\pm0.51$ & 0.0128 & 0.0161 & 0.09 & $11.1/8.3/80.6$ \\
\bottomrule
\end{tabular}
\vspace{-0.2in}
\end{table*}

Raising $\delta$ from zero to $\delta^\star$ increases gain from 6.14\% to 10.16\% at mean error 0.0104; both larger tested budgets exceed $\epsilon_g=0.015$ (Table~\ref{tab:delta-sensitivity}). Doubling calibration from 16 to 32 microbatches adds 0.02 percentage points. Per-microbatch refresh improves gain over static risk from 8.31\% to 10.16\% and lowers maximum error from 0.0161 to 0.0114 at 0.21\% overhead, supporting current-update adaptation. Refreshing every four microbatches retains 10.04\% gain at 0.12\% overhead.

\begin{wraptable}{r}{0.45\textwidth}
\centering
\vspace{-0.3in}
\caption{Profiled actor-update components in seconds; totals sum the listed components.}
\label{tab:overhead}
\setlength{\tabcolsep}{3pt}
\resizebox{\linewidth}{!}{%
\begin{tabular}{@{}lrrr@{}}
\toprule
Component & GC & \method-HR & \method \\
\midrule
Forward & 146.72 & 146.85 & 146.91 \\
Recomputation & 207.38 & 164.10 & 118.64 \\
Candidate encoding & -- & -- & 6.37 \\
L reconstruction & -- & -- & 7.88 \\
Exposure refresh & -- & 0.72 & 0.74 \\
Allocation & -- & 0.51 & 0.58 \\
Backward & 328.79 & 331.21 & 338.79 \\
\midrule
Profiled total & $682.89_{\pm4.22}$ & $643.39_{\pm3.88}$ & $619.91_{\pm3.74}$ \\
\bottomrule
\end{tabular}}
\vspace{-0.15in}
\end{wraptable}

\textbf{Runtime breakdown.}
Five warmed-up repetitions use synchronized, non-overlapping component timing. Backward includes gradient computation/communication, gradient-norm clipping, and optimizer work. Encoding covers all captured candidates, including those discarded; allocation covers plan and recovery selection. Offline profiling, initial calibration, and reference-gradient collection are timed separately. Recomputation savings (88.74 s) exceed encoding, reconstruction, refresh, and allocation costs (15.57 s). Including forward/backward changes, the profiled total falls by 9.22\% relative to GC and saves 23.48 s beyond H/R alone (Table~\ref{tab:overhead}).

\section*{Limitations}
\label{app:statements}
\method estimates gradient distortion through calibrated exposure proxies and additive risk scores motivated by first-order perturbation analysis. Complete-update gradient checks assess the combined effects of calibration error and higher-order interactions. Calibration statistics remain fixed during training, while exposure and risk scores refresh online. Evaluations primarily cover rank-16 LoRA with responses up to 4K on 48-GB GPUs; full-parameter evidence covers Llama-3.2-3B-Instruct / DSR-sub through single-seed learning curves and single-A100 efficiency measurements.

\end{document}